\def\BibTeX{{\rm B\kern-.05em{\sc i\kern-.025em b}\kern-.08em
    T\kern-.1667em\lower.7ex\hbox{E}\kern-.125emX}}
\newtheorem{example}{Example}
\newtheorem{theorem}{Theorem}
\newtheorem{lemma}{Lemma}
\newtheorem{remark}{Remark}
\newtheorem{assumption}{Assumption}
\definecolor{orange}{RGB}{200,0,100}
\def\bzeta{{\bm{\zeta}}}
\def\btheta{{{\bm{\theta}}}} 
\def\blambda{{{\bm{\lambda}}}}
\begin{document}
\title{
Sparsity-Aware Distributed Learning for Gaussian Processes with Linear Multiple Kernel
}

\author{
Richard Cornelius Suwandi$^{\orcidlink{0009-0001-7894-1674}}$, \IEEEmembership{Graduate Student Member, IEEE},
Zhidi Lin$^{\orcidlink{0000-0002-6673-511X}}$, 
\IEEEmembership{Graduate Student Member, IEEE}, \\ Feng Yin$^{\orcidlink{0000-0001-5754-9246}}$, \IEEEmembership{Senior Member, IEEE}, Zhiguo Wang$^{\orcidlink{0000-0003-2604-3597}}$, \IEEEmembership{Member, IEEE}, 
and Sergios Theodoridis$^{\orcidlink{0000-0001-5040-161X}}$, \IEEEmembership{Life Fellow, IEEE}
\thanks{This work was supported in part by the NSFC under Grant 62271433 and in part by Shenzhen Science and Technology Program under Grant JCYJ20220530143806016 and  RCJC20210609104448114. \textit{(Corresponding author: Feng
Yin.)}}
\thanks{This paper is an extension of our prior work \cite{suwandiGaussianProcessRegression2022} presented at the IEEE International Conference on  Information Fusion (FUSION) in 2022.}
\thanks{R.C. Suwandi and F. Yin are with the School of Science and Engineering, The Chinese University of Hong Kong, Shenzhen 518172, China  (e-mail: \href{mailto:richardsuwandi@link.cuhk.edu.cn}{richardsuwandi@link.cuhk.edu.cn}, \href{mailto:yinfeng@cuhk.edu.cn}{yinfeng@cuhk.edu.cn}).}
\thanks{
Z. Lin is with the Department of Statistics and Data Science, National University of Singapore, Singapore 117546, and was with the School of Science and Engineering, The Chinese University of Hong Kong, Shenzhen 518172, China (email: \href{mailto:zhidilin@link.cuhk.edu.cn}{zhidilin@nus.edu.sg}).}
\thanks{Z. Wang is with the Department of Mathematics, Sichuan University, Chengdu 610064, Sichuan, China (email: \href{mailto:wangzhiguo@scu.edu.cn}{wangzhiguo@scu.edu.cn}).}
\thanks{S. Theodoridis is with the Department of Informatics and Telecommunications of the National and Kapodistrian University of Athens (email: \href{mailto:stheodor@di.uoa.gr}{stheodor@di.uoa.gr}).}
\thanks{R.C. Suwandi and Z. Lin contribute equally and are the co-first authors.} 
}

\maketitle

\begin{abstract}
Gaussian processes (GPs) stand as crucial tools in machine learning and signal processing, with their effectiveness hinging on kernel design and hyper-parameter optimization. This paper presents a novel GP linear multiple kernel (LMK) and a generic sparsity-aware distributed learning framework to optimize the hyper-parameters. The newly proposed grid spectral mixture product (GSMP) kernel is tailored for multi-dimensional data, effectively reducing the number of hyper-parameters while maintaining good approximation capability. We further demonstrate that the associated hyper-parameter optimization of this kernel yields sparse solutions. To exploit the inherent sparsity of the solutions, we introduce the Sparse LInear Multiple Kernel Learning (SLIM-KL) framework. The framework incorporates a quantized alternating direction method of multipliers (ADMM) scheme for collaborative learning among multiple agents, where the local optimization problem is solved using a distributed successive convex approximation (DSCA) algorithm. SLIM-KL effectively manages large-scale hyper-parameter optimization for the proposed kernel, simultaneously ensuring data privacy and minimizing communication costs. Theoretical analysis establishes convergence guarantees for the learning framework, while experiments on diverse datasets demonstrate the superior prediction performance and efficiency of our proposed methods.
\end{abstract}

\begin{IEEEkeywords}
Linear multiple kernel, Gaussian process, multi-dimensional data, communication-efficient distributed learning, sparsity-aware learning.
\end{IEEEkeywords}

\section{Introduction}
\label{sec:Introduction}
\IEEEPARstart{R}{ecently}, Bayesian methods have gained significant attention across a wide range of machine learning and signal processing applications. They are known for their ability to enhance model robustness by leveraging prior knowledge and effectively handling uncertainties associated with the estimates \cite{chengRethinkingBayesianLearning2022}. Among these methods, Gaussian process (GP) has emerged as a particularly promising model that offers reliable estimates accompanied with uncertainty quantifications \cite{rasmussenGaussianProcessesMachine2006, pan2017prediction, schmidt2023probabilistic}. The distinctive strength of GP lies in its explicit modeling of the underlying function using a GP prior, which empowers GP to capture intricate patterns in the data, even with limited observations \cite{skolidis2013semisupervised, lin2023TAG}. Unlike conventional machine learning models that rely on point estimates, GP embraces a probabilistic framework that yields a posterior distribution over the function given the observed data \cite{theodoridisMachineLearningBayesian2020, chowdhary2015BayesianNonparametric}. 

The representation power of a GP model heavily relies on the selection of an appropriate kernel function. Several traditional kernel functions have been used in GP models, including the squared-exponential (SE) kernel, Ornstein-Uhlenbeck kernel, rational quadratic kernel, and periodic kernel, among others \cite{rasmussenGaussianProcessesMachine2006}. These kernel functions can even be combined, for example, into a linear multiple kernel (LMK) in the form of a linearly-weighted sum to enhance the overall modeling capacity \cite{xuWirelessTrafficPrediction2019}. However, the selection of these preliminary kernel functions often relies heavily on subjective expert knowledge, making it impractical for complex applications \cite{zhang2021kernel}.

To address the challenge of manual kernel design, L{\'a}zaro-Gredilla \textit{et al.} introduced the sparse spectrum GP (SSGP) that employs a sparse set of spectral points to approximate the spectral density of a stationary kernel function \cite{lazaro2010sparse}. However, optimizing these spectral points can lead to overfitting, especially as their number increases, resulting in poor generalization on unseen data \cite{cui2024probRL}. Wilson \textit{et al.} proposed the spectral mixture (SM) kernel to discover underlying data patterns automatically via formulating the spectral density of kernel as a Gaussian mixture \cite{wilsonGaussianProcessKernels2013}. However, the original SM kernel presents difficulties in tuning its hyper-parameters due to the non-convex nature of the optimization problem. This increases the risk of getting stuck at a sub-optimal solution, especially when dealing with a large number of hyper-parameters. The grid spectral mixture (GSM) kernel was proposed to address this drawback in the SM kernel \cite{yinLinearMultipleLowRank2020}. Specifically, the GSM kernel focuses on optimizing the weights of each kernel component while fixing other hyper-parameters, resulting to a more favorable optimization structure. Consequently, the GSM kernel acts as a LMK, allowing its sub-kernels to exhibit beneficial low-rank properties under reasonable conditions \cite{yinLinearMultipleLowRank2020}. By identifying a sparse subset of the most relevant frequency components in the input data, the GSM kernel enhances the model interpretability and facilitates a clearer understanding of the underlying data patterns.

While employing GSM kernel-based GP (GSMGP) models holds great potential \cite{chen2022recent}, it also comes with several challenges. Existing formulation of the GSM kernel is only suitable for handling one-dimensional data \cite{yinLinearMultipleLowRank2020}, and naively extending it to multi-dimensional data leads to potential issues, as the number of hyper-parameters and the model complexity grow exponentially with the dimensions \cite{suwandiGaussianProcessRegression2022}.
Additionally, centralized optimization of the GSMGP models in real-world scenarios introduces further challenges. On one hand, the $\mathcal{O}(n^3)$ computational complexity, where $n$ is the training data size, hinders the scalability of GSMGP in big data applications \cite{Liu2020scalableGP, zhai2023distributed, dang2024GP}. On the other hand, collecting data centrally becomes difficult in practice, as data are often distributed across locations or devices, and concerns about data privacy often discourage individual data owners from sharing their data \cite{yinFedLocFederatedLearning2020}.

In this paper, we address the challenge of constructing an optimal GP kernel for multi-dimensional and large-scale data. Unlike existing solutions, such as sparse variational GPs with inducing points \cite{titsias2009variational, liu2021gp}, which remain inefficient due to model complexity and data size, we propose a novel and practical learning framework that leverages advancements in distributed computing to efficiently optimize kernel hyper-parameters. Our main contributions are summarized as follows:
\begin{itemize}
\item We introduce a novel formulation for the GSM kernel capable of accommodating multi-dimensional data. Compared to existing formulations \cite{yinLinearMultipleLowRank2020,suwandiGaussianProcessRegression2022}, our approach effectively reduces the number of hyper-parameters while maintaining a good approximation capability. Furthermore, the proposed kernel exhibits a sparsity-promoting property that enables efficient optimization of the hyper-parameters. 
\item We leverage advances in distributed computing and exploit the inherent sparsity of our proposed kernel to develop a practical learning framework for GPs with LMK. The framework, termed Sparse LInear Multiple Kernel Learning (SLIM-KL), consists of two main components: 1) a quantized alternating direction method of multipliers (ADMM) scheme, allowing multiple agents to collaboratively learn the kernel hyper-parameters while preserving data privacy and reducing communication costs; and 2) a distributed successive convex approximation (DSCA) algorithm for solving the local hyper-parameter optimization problem in a distributed manner within the quantized ADMM. As a result, SLIM-KL offers scalability, maintains privacy, and ensures efficient communication.
\item We provide a comprehensive theoretical analysis for the proposed learning framework. Additionally, we conduct extensive experiments on diverse one-dimensional and multi-dimensional datasets to evaluate its performance. The results demonstrate that our proposed framework consistently outperform the existing approaches in terms of prediction performance and computation efficiency.
\end{itemize}
The remainder of this paper is organized as follows. Section \ref{sec:preliminaries} reviews the GP regression and GSM kernel. Sections \ref{sec:GSM-MISO} and \ref{sec:SLIM-KL} introduce the proposed kernel formulation and learning framework. Section \ref{sec:analysis} presents the theoretical analysis of our proposed framework. Section \ref{sec:experiments} demonstrates the experimental results. Finally, Section \ref{sec:conclusion} concludes the paper. Key technical proofs and derivations are given in the Appendix.

\section{Preliminaries}
\label{sec:preliminaries}
This section first reviews the salient GP regression model. Then, the GSM kernel and the associated hyper-parameter optimization problem are introduced.

\subsection{Gaussian Process Regression}
\label{subsec:GPR}
 
Gaussian process regression (GPR) is a powerful, Bayesian non-parametric approach in machine learning that incorporates prior knowledge and offers predictions with uncertainty estimates \cite{rasmussenGaussianProcessesMachine2006}. 
Given an observed dataset  $\mathcal{D}\triangleq\{\bm{x}_i, {y}_i\}_{i = 1}^{n} \triangleq \{\bm{X}, \bm{y}\}$ consisting of $n$ input-output pairs, let us consider the following regression model,
\begin{equation}
y = f(\bm{x}) + {e},  \quad 	{e} \sim \mathcal{N}(0, \sigma_{e}^2),
\label{eq:reg_model}
\end{equation}
where ${y}$ is the output, $e$ is the additive zero-mean Gaussian noise with variance $\sigma_e^2$, and $f(\bm{x})$ is the unknown function we aim to learn. At the heart of GPR lies the assumption that $f(\bm{x})$ follows a Gaussian process, defined as a collection of random variables wherein any finite subset follows a joint Gaussian distribution \cite{rasmussenGaussianProcessesMachine2006}. Mathematically, a GP is expressed as,
\begin{equation}
f(\bm{x}) \sim \mathcal{GP}\left(m(\bm{x}), k(\bm{x}, \bm{x}^\prime;  \bm{\theta})\right),
\end{equation}
where $m(\bm{x})$ is the mean function, typically set to zero in practice, and $k(\bm{x}, \bm{x}'; \bm{\theta})$ is the covariance function or kernel, with $\bm{\theta}$ being its hyperparameters. This kernel function encapsulates our assumptions about the function, such as smoothness and periodicity \cite{rasmussenGaussianProcessesMachine2006}.  

In the GPR setting, we are interested in predicting new function values $\bm{f}_* \!\triangleq\! f(\bm{X}_*)$ for any unseen inputs $\bm{X}_*$. Due to the Gaussian nature, the joint distribution of $\bm{y}$ and $\bm{f}_*$ can be derived as,
\begin{align}
    \left[\begin{array}{c}
        \bm{y}  \\
        \bm{f}_* 
    \end{array}\right]
    \sim
    \mathcal{N}\left(\bm{0}, 
    \left[ \begin{array}{cc}
       \boldsymbol{K}_{XX}+ \sigma_{e}^{2} \boldsymbol{I}_{n}  & \boldsymbol{K}_{ {X}_*X}^\top \\
       \boldsymbol{K}_{ {X}_*X}  & \boldsymbol{K}_{{X}_* {X}_*}
    \end{array} 
    \right] \right).
\end{align}
Here, $\bm{K}_{XX}$ and $\bm{K}_{X_* X_*}$ represent the covariance matrices derived from evaluating the kernel function $k(\bm{x}_i, \bm{x}_j; \bm{\theta})$ on the training and test inputs, respectively. Similarly, $\boldsymbol{K}_{ {X}_*X}$ denotes the cross-covariance matrix evaluated using the training and test inputs. By applying the results of conditional Gaussian distribution, the predictive distribution $p(\bm{f}_{*} \mid \mathcal{D}, \boldsymbol{X}_{*}) = \mathcal{N} \left(\bm{f}_{*} \mid  {\boldsymbol{\xi}_*} , \ {\boldsymbol{\Xi}_*}  \right)$ can be computed in closed-form, where the corresponding mean and covariance are respectively defined as 
\begin{subequations}
\setlength{\abovedisplayskip}{4pt}
\setlength{\belowdisplayskip}{4pt}
    \begin{align}
    {\boldsymbol{\xi}_*} & = \boldsymbol{K}_{{X}_* X} \left(\boldsymbol{K}_{XX}+ \sigma_{e}^{2} \boldsymbol{I}_{n} \right)^{-1} { \boldsymbol{y}}, \label{eq:post_mean}\\
    {\boldsymbol{\Xi}_*}  & = \boldsymbol{K}_{{X}_* {X}_*} - \boldsymbol{K}_{{X}_* X} \left( \boldsymbol{K}_{XX}+ \sigma_{e}^{2} \boldsymbol{I}_{n}\right)^{-1} \boldsymbol{K}_{ {X}_*X}^\top.\label{eq:post_cov}
    \end{align}
\end{subequations}
This predictive distribution shows that the GPR not only offers a point estimate for the predicted function values (the posterior mean $\boldsymbol{\xi}_*$), but also quantifies the uncertainty related to these predictions (the posterior covariance $\boldsymbol{\Xi}^*$).

\subsection{Grid Spectral Mixture Kernel}
\label{subsec:GSM}
Selecting an appropriate kernel is crucial for ensuring good performance of GP models. However, this process is often challenging and traditionally relies on subjective expert knowledge. Grid spectral mixture (GSM) kernel is a universal kernel that eliminates the need for manual kernel design and was originally proposed in \cite{yinLinearMultipleLowRank2020} to address the model training issues in the spectral mixture (SM) kernel \cite{wilsonGaussianProcessKernels2013}. Both the SM and GSM kernels leverage the fact that any stationary kernel and its spectral density are Fourier duals \cite{rasmussenGaussianProcessesMachine2006}. Specifically, the GSM kernel is designed by approximating the spectral density of the underlying kernel function in the frequency domain with a Gaussian mixture, $S({\omega})$, i.e.,
\begin{equation}
\setlength{\abovedisplayskip}{4pt}
\setlength{\belowdisplayskip}{4pt}
S({\omega}) = \frac{1}{2} \sum_{q=1}^{Q}  {\theta_q} \left[ \mathcal{N}\left({\omega} \mid {\mu}_q, \ v_q\right) + \mathcal{N}\left({\omega} \mid -{\mu}_q, \ v_q\right) \right], 
\label{eq:GMapproximatedSPD}
\end{equation}
where $Q$ is the number of Gaussian components and $\theta_q$ is the corresponding weight of the $q$-th component in the Gaussian mixture. Unlike the SM kernel, the mean variables $\{\mu_q\}_{q = 1}^Q$ and the variance variables $\{v_q\}_{q = 1}^Q$ in the GSM kernel are fixed, either uniformly or randomly, to preselected grid points \cite{yinLinearMultipleLowRank2020}. By taking the inverse Fourier transform of $S(\omega)$, we obtain the formulation for the GSM kernel as
\begin{equation}
 k(\tau) = \sum_{q=1}^{Q} {\theta_q} \underbrace{ \cos(2 \pi \tau \textcolor{black}{\mu_q}) \exp \left[ -2 \pi^{2} \tau^2 \textcolor{black}{v_{q}} \right] }_{k_{q}(\tau)},  
\label{eq:GSM-Kernel}
\end{equation}
where ${\tau}= |{x} -{x}^\prime| \in \mathbb{R}$. 
In \cite{yinLinearMultipleLowRank2020}, it has been proven that the GSM kernel is able to approximate any stationary kernel arbitrarily well. In addition, the GSM kernel formulated above also demonstrates itself to be an LMK, where $\{k_q(\tau)\}_{q=1}^Q$ are the $Q$ sub-kernels with weights $\{\theta_q\}_{q=1}^Q$ to be optimized. As an LMK, the GSM sub-kernels benefit from the low-rank property under reasonable conditions \cite{yinLinearMultipleLowRank2020}. Consequently, the GSM kernel combines the conciseness of LMKs with the representational power of the SM kernel.

One limitation of the original GSM kernel formulated in Eq.~(\ref{eq:GSM-Kernel}) is that, it is designed solely for handling one-dimensional data, i.e., data with an input dimension $P$ equal to 1. Naively extending it to multi-dimensional data, where $P > 1$, by fixing grid points in $\mathbb{R}^P$, can lead to potential issues \cite{suwandiGaussianProcessRegression2022}. Most notably, the number of grid points grows exponentially with $P$ due to the curse of dimensionality. In other words, substantially more Gaussian components are needed to approximate the spectral density in the $P$-dimensional case compared to the one-dimensional case. To illustrate this, consider a ten-dimensional dataset with 100 components per dimension. The total number of components needed would be $Q = 100^{10} = 10^{20}$, which is computationally infeasible. This rapid increase in the number of components makes the original GSM kernel impractical for high-dimensional data. In Section \ref{sec:GSM-MISO}, we mitigate this exponential growth of $Q$ by proposing a new formulation for the GSM kernel.

\subsection{Hyper-parameter Optimization}
\label{subsec:vSCA-hyper-opt}
In the GPR model, hyper-parameter optimization typically involves maximizing the model log marginal likelihood \cite{theodoridisMachineLearningBayesian2020}, which can be computed analytically due to the Gaussian assumption. Consequently, learning the hyper-parameters $\btheta \!=\! [\theta_1, \ldots, \theta_Q]^\top$ in the GSM kernel can be formulated into the following equivalent minimization problem,
\begin{equation}
\hat{\bm{\theta}} = \arg \min_{\bm{\theta}} \, \underbrace{\bm{y}^\top  \bm{C}^{-1}(\bm{\theta})\bm{y} + \log \det \left(\bm{C}(\bm{\theta})\right)}_{\triangleq l(\btheta)} + \, \text{constant},
\label{eq:DCP}
\end{equation}
where $\bm{C}(\bm{\theta}) \triangleq \bm{K}_{XX} + \sigma_{e}^2 \bm{I}_n$. Commonly, gradient-based methods, such as BFGS-Newton and conjugate gradient descent \cite{rasmussenGaussianProcessesMachine2006}, are employed to solve the above minimization problem. However, these methods encounter two primary challenges. First, these methods involve kernel matrix inversion, leading to a prohibitive computational complexity of $\mathcal{O}(n^3)$. Second, the non-convex nature of the objective often causes these methods to easily get stuck in sub-optimal solutions.

Indeed, upon careful examination of the LMK case (e.g., the GSM kernel), the optimization problem in Eq.~(\ref{eq:DCP}) is a well-known difference-of-convex programming (DCP) problem \cite{boydConvexOptimization2004}, where $g(\btheta) \!\triangleq\! \bm{y}^\top  \bm{C}^{-1}(\bm{\theta})\bm{y}$ and $h(\btheta) \!\triangleq\! -\log \det \left(\bm{C}(\bm{\theta})\right)$ are convex functions with respect to $\btheta$.  By exploiting this difference-of-convex structure, the successive convex approximation (SCA) algorithm can be applied to solve the optimization problem efficiently \cite{scutariParallelDistributedSuccessive2018}. The basic principle behind the SCA algorithm is to solve the problem in Eq.~(\ref{eq:DCP}) iteratively, where at each iteration a so-called \textit{surrogate function} $\tilde{l}(\btheta , \btheta^\eta ): \Theta \times \Theta \mapsto \mathbb{R}$ is minimized, i.e.,
\begin{equation}
\btheta^{\eta +1} = \arg \min_{\btheta} \, \tilde{l}(\btheta, \btheta^{\eta }).
\label{eq:convdcp}
\end{equation}
Here, $\tilde{l}(\btheta , \btheta^\eta )$ is assumed to be: 1) strongly convex on $\Theta$, and 2) differentiable with $\nabla_{\boldsymbol{\theta}} \tilde{l}(\btheta , \btheta^\eta ) \!\!=\!\! \nabla_{\boldsymbol{\theta}} l(\btheta) \big|_{\btheta = \btheta^\eta }$.
By fulfilling these conditions, the SCA algorithm is guaranteed to converge to a stationary solution \cite{scutariParallelDistributedSuccessive2018}. In our DCP problem, one way to construct such convex surrogate function is to make $h(\btheta)$ affine by performing the first-order Taylor expansion,
\begin{equation}
\label{eq:sca-surrogate}
\begin{aligned}
\tilde{l}(\btheta, \btheta^\eta ) &= g(\btheta) - h(\btheta^\eta ) - \nabla_{\boldsymbol{\theta}} h(\btheta^\eta )^\top (\btheta - \btheta^\eta ).\\
\end{aligned}
\end{equation}
Hence, the convex minimization problem formulated in Eq.~\eqref{eq:convdcp} can be solved efficiently using the commercial solver MOSEK \cite{mosek, yinLinearMultipleLowRank2020}, with a computational complexity of $\mathcal{O}(Qn^3)$ per iteration. 
It is also noteworthy that the constructed surrogate function in Eq.~(\ref{eq:sca-surrogate}) is a global upper bound of the objective function $l(\btheta)$, so the above SCA algorithm actually coincides with the majorization-minimization (MM) algorithm \cite{sunMajorizationMinimizationAlgorithmsSignal2017} applied to the original GSM kernel learning \cite{yinLinearMultipleLowRank2020}.

Despite the efficient convex optimization framework offered by SCA, it encounters two primary computational limitations that need to be addressed. First, its centralized computation does not scale well to large or high-dimensional datasets, with the associated computational complexity scaling as $\mathcal{O}(Qn^3)$. Second, SCA requires an access to the full dataset at each iteration, which is prohibitive for massive distributed datasets prevalent in real-world applications. This also raises privacy concerns, as individual data must be aggregated to a central unit for processing.

\section{GSMP Kernel for Multi-dimensional Data}
\label{sec:GSM-MISO}

The original GSM kernel, introduced in Section \ref{subsec:GSM}, is primarily designed for one-dimensional input data ($P=1$), limiting its utility for multi-dimensional data. When applied to multi-dimensional data, the GSM kernel experiences an exponential increase in both model and computational complexity due to the rapid growth in the number of candidate grid points \cite{suwandiGaussianProcessRegression2022}. This exponential growth poses computational challenges and complicates the application of GSM kernel in high-dimensional problems.  

\subsection{The Proposed GSMP Kernel}
\label{subsec:gsmp}
To address the limitations of the GSM kernel, we propose a new formulation called the grid spectral mixture product (GSMP) kernel. The GSMP kernel is formulated by taking the product of the one-dimensional sub-kernels, $k_q\left(\tau^{(p)}\right)$, across all dimensions, 
\begin{equation}
\setlength{\abovedisplayskip}{4pt}
\setlength{\belowdisplayskip}{4pt}
     k(\bm{\tau}) \!=\! \sum_{q=1}^{Q} \theta_{q} \prod_{p=1}^{P}  \underbrace{\exp \left\{-2 \pi^{2} {\tau^{(p)^2}} v_{q}^{{(p)}}\right\} \cos \left(2 \pi \tau^{(p)} \mu_{q}^{(p)}\right)}_{\text{one-dimensional sub-kernel:} \ k_q\left(\tau^{(p)}\right)},
    \label{eq:gsmp}
\end{equation}
where ${v}_q^{(p)}$ and $\mu_q^{(p)}$ are fixed for dimension $p$, $\forall p \! \in \! \{1, 2, \ldots, P\}$, as in the original one-dimensional GSM kernel, see Eq.~\eqref{eq:GSM-Kernel}; and $\{\theta_q\}_{q=1}^Q$ are the weights to be optimized. The GSMP kernel formulated in Eq. (\ref{eq:gsmp}) is essentially an LMK, preserving the favorable difference-of-convex structure for hyper-parameter optimization. 

To construct the GSMP kernel, we need to generate fixed grid points for each sub-kernel component $k_q(\tau^{(p)})$ along each dimension 
$p$, $p \in \{1, 2, \ldots, P\}$. We first fix the variance variables $v_q^{(p)}, \forall q$, $p$ to a small constant, e.g., 0.001. Then, we empirically sample $Q$ frequencies, either uniformly or randomly, from the frequency range $[0, \mu_{u}^{(p)}]$ to obtain $[\mu_1^{(p)}, \ldots, \mu_Q^{(p)}]$, where $\mu_{u}^{(p)}$ is the highest frequency set to be equal to $1/2$ over the minimum input spacing between two adjacent training data points in the $p$-th dimension. Using the sampled frequencies and fixed variance, we can evaluate the GSMP kernel in Eq. \eqref{eq:gsmp}, leaving the weights $\{\theta_q\}_{q=1}^Q$ to be tuned. In our experiments, without any further specification, we set the number of components $Q$ to scale linearly with the number of dimensions. This allows the GSMP kernel to effectively alleviate the curse of dimensionality while ensuring the stability of the optimization process.

By applying a Fourier transform on the GSMP kernel, we can obtain the corresponding spectral density of the GSMP kernel, which appears as a Gaussian mixture:
\begin{theorem}
\label{thm:GSMP_GMM}
   The spectral density of the GSMP kernel, defined in Eq.~\eqref{eq:gsmp}, is a Gaussian mixture given by,
    \begin{equation}
    \setlength{\abovedisplayskip}{4pt}
    \setlength{\belowdisplayskip}{4pt}
        \begin{aligned}
         S(\boldsymbol{\omega}) = \frac{1}{2^P} \sum_{q=1}^{Q} \theta_{q}  \prod_{p=1}^{P}    & \left[ \mathcal{N}   \left(\omega^{(p)} \vert \mu_q^{(p)}, v_q^{(p)}  \right) \right.\\
            & + \left. \mathcal{N}   \left(\omega^{(p)} \vert - \mu_q^{(p)}, v_q^{(p)} \!\right) \right].
        \end{aligned}
        \label{eq:gsmp-density_}
    \end{equation}
\end{theorem}
\begin{proof}
    The proof can be found in Appendix \ref{appendix:density-gsmp}.
\end{proof}
It is worth noting that Gaussian mixtures are dense in the set of all distributions \cite{hatzinakosGaussianMixturesTheir2001}. Hence, the Fourier dual of this set, which corresponds to the set of stationary kernels, is also dense \cite{wilsonGaussianProcessKernels2013}. 
That is to say, Theorem \ref{thm:GSMP_GMM} implies that, given large enough components $Q$, the GSMP kernel has the ability to approximate the underlying stationary kernel well \cite{yinLinearMultipleLowRank2020}.

\subsection{Properties of GSMP Kernel}
\label{subsec:gsmp-properties}
The GSMP kernel exhibits several advantageous properties that enhances its utility over existing formulations like the GSM kernel. One good property of the GSMP kernel is that it requires significantly fewer grid points to achieve a good approximation capability compared to the multi-dimensional GSM kernel proposed in \cite{suwandiGaussianProcessRegression2022}. We illustrate this property with a simple example.
\begin{example} \label{example:1}
In the 2-dimensional space, according to Eq.~\eqref{eq:gsmp-density_},  the spectral density of the GSM kernel is given by \cite[Eq. (11)]{suwandiGaussianProcessRegression2022},
\begin{align}
    \label{eq:gsm-density}
    &S_{\mathrm{GSM}}(\bm{\omega}) = \sum_{q=1}^{Q} \frac{\theta_q}{2} \left[  
     \mathcal{N}\left(\omega^{(1)}\vert \mu_q^{(1)}, v_q^{(1)}\right) \mathcal{N}\left(\omega^{(2)}\vert \mu_q^{(2)}, v_q^{(2)}\right) \right. \nonumber \\
    & \left. \hspace{4ex} + \ \mathcal{N}\left(\omega^{(1)}\vert - \mu_q^{(1)}, v_q^{(1)}\right) \mathcal{N}\left(\omega^{(2)}\vert - \mu_q^{(2)}, v_q^{(2)}\right) \right].
\end{align}
while the spectral density of the GSMP kernel is,
\begin{equation}
    \label{eq:gsmp-density}
    \begin{aligned}
     &S_{\mathrm{GSMP}}(\boldsymbol{\omega}) = 
     \sum_{q=1}^{Q} \frac{\theta_q}{4}  \left[  
     \mathcal{N}\left(\omega^{(1)}\vert \mu_q^{(1)}, v_q^{(1)}\right) \mathcal{N}\left(\omega^{(2)}\vert \mu_q^{(2)}, v_q^{(2)}\right)  \right. \\
     & \left. \hspace{6ex} + \ \mathcal{N}\left(\omega^{(1)}\vert - \mu_q^{(1)}, v_q^{(1)}\right) \mathcal{N}\left(\omega^{(2)}\vert - \mu_q^{(2)}, v_q^{(2)}\right)  \right.  \\
     & \left. \hspace{6ex} + \ \mathcal{N}\left(\omega^{(1)}\vert \mu_q^{(1)}, v_q^{(1)}\right) \mathcal{N}\left(\omega^{(2)}\vert -\mu_q^{(2)}, v_q^{(2)}\right)  \right.  \\
     & \left. \hspace{6ex} + \ \mathcal{N}\left(\omega^{(1)}\vert -\mu_q^{(1)}, v_q^{(1)}\right) \mathcal{N}\left(\omega^{(2)}\vert \mu_q^{(2)}, v_q^{(2)}\right) \right],
\end{aligned}
\end{equation}
The GSMP density contains the $2Q$ densities of the GSM plus additional $2Q$ Gaussian densities, for approximating the underlying density, as demonstrated in Eqs.~\eqref{eq:gsm-density} and \eqref{eq:gsmp-density}.  
\end{example}

As illustrated in Example \ref{example:1}, the spectral density of the GSMP kernel exhibits a greater number of Gaussian densities for approximating the spectral density of the underlying kernel, compared to that of the GSM kernel. Extending this to a $P$-dimensional space, the number of additonal Gaussian densities is $Q \times (2^P - 2)$ for the same number of grid points, $Q$, and dimensionality, $P$. Consequently, the GSM kernel requires an additional $Q\times(2^{P-1} - 1)$ grid points or sub-kernels to match GSMP kernel's capacity. This structural advantage of the GSMP kernel, facilitating fewer preselected grid points or sub-kernels, not only alleviates the curse of dimensionality but also reduces both the model and computational complexity. Further results and discussions are deferred to Section \ref{subsec:gsmp-vs-gsm}.

Another property of the GSMP kernel is that, the solution to the corresponding hyper-parameter optimization problem formulated in Eq. (\ref{eq:DCP}) is sparse, as succinctly summarized in the following theorem.
\begin{theorem}
\label{thm:sparsity}
Every local minimum of the hyper-parameter optimization problem formulated in Eq. (\ref{eq:DCP}) for the proposed GSMP kernel, is achieved at a sparse solution, regardless of whether noise is present or not.
\begin{proof}
    The proof can be found in Appendix \ref{appendix:sparsity}.
\end{proof}
\end{theorem}
The solution's sparsity is a notable advantage for the proposed GSMP kernel. First, it ensures that only the grid points or sub-kernels deemed significant by the data are pinpointed, enhancing the kernel's interpretability. Second, by refraining from utilizing all grid points for data fitting, the issue of over-parameterization can be effectively alleviated. In Section \ref{sec:SLIM-KL}, we demonstrate how to exploit this inherent sparsity to design an efficient optimization framework for learning the hyper-parameters.

\section{Sparsity-Aware Distributed Learning Framework}
\label{sec:SLIM-KL}
Previously, the SCA algorithm introduced in Section \ref{subsec:vSCA-hyper-opt} suffers from scalability issues as solving the optimization problem in Eq. (\ref{eq:convdcp}) becomes computationally prohibitive for big data, i.e., with large $n$. Moreover, large amounts of labeled training data are usually aggregated from a large number of local agents or mobile devices, which may cause severe data privacy issues \cite{yinFedLocFederatedLearning2020}. Practical constraints faced by agents, such as limited bandwidth, battery power, and computing resources \cite{kashyapQuantizedConsensus2007}, further exacerbate these issues. For instance, the extensive information exchange between diverse local agents often exceed the available bandwidth for data transmission, as discussed in \cite{amiriFederatedLearningQuantized2020}. 

To address these challenges, we propose an efficient, sparsity-aware learning framework, termed Sparse LInear Multiple Kernel Learning (SLIM-KL), depicted in Fig.~\ref{fig:SLIM-KL}, for learning the proposed kernel. Unlike the doubly distributed SCA framework proposed in \cite{suwandiGaussianProcessRegression2022}, SLIM-KL integrates a quantized alternating direction method of multipliers (ADMM) scheme to enable collaborative learning with reduced communication costs and practical consideration of transmission constraints. This quantization scheme is a key differentiator, ensuring efficient transmission of model hyper-parameters between distributed agents. Additionally, SLIM-KL employs a distributed successive convex approximation (DSCA) algorithm for scalable training of a large number of hyper-parameters while leveraging the sparsity-promoting property of the proposed GSMP kernel. Note that while this paper focuses on the GSMP kernel, the proposed learning framework can be applied to any LMKs. The key notations used in our SLIM-KL framework are summarized in Table \ref{tab:notations}. In the following two subsections, we detail the two key components of our SLIM-KL framework.

\begin{figure*}[!t]
    \centering
    \includegraphics[width=.8\linewidth]{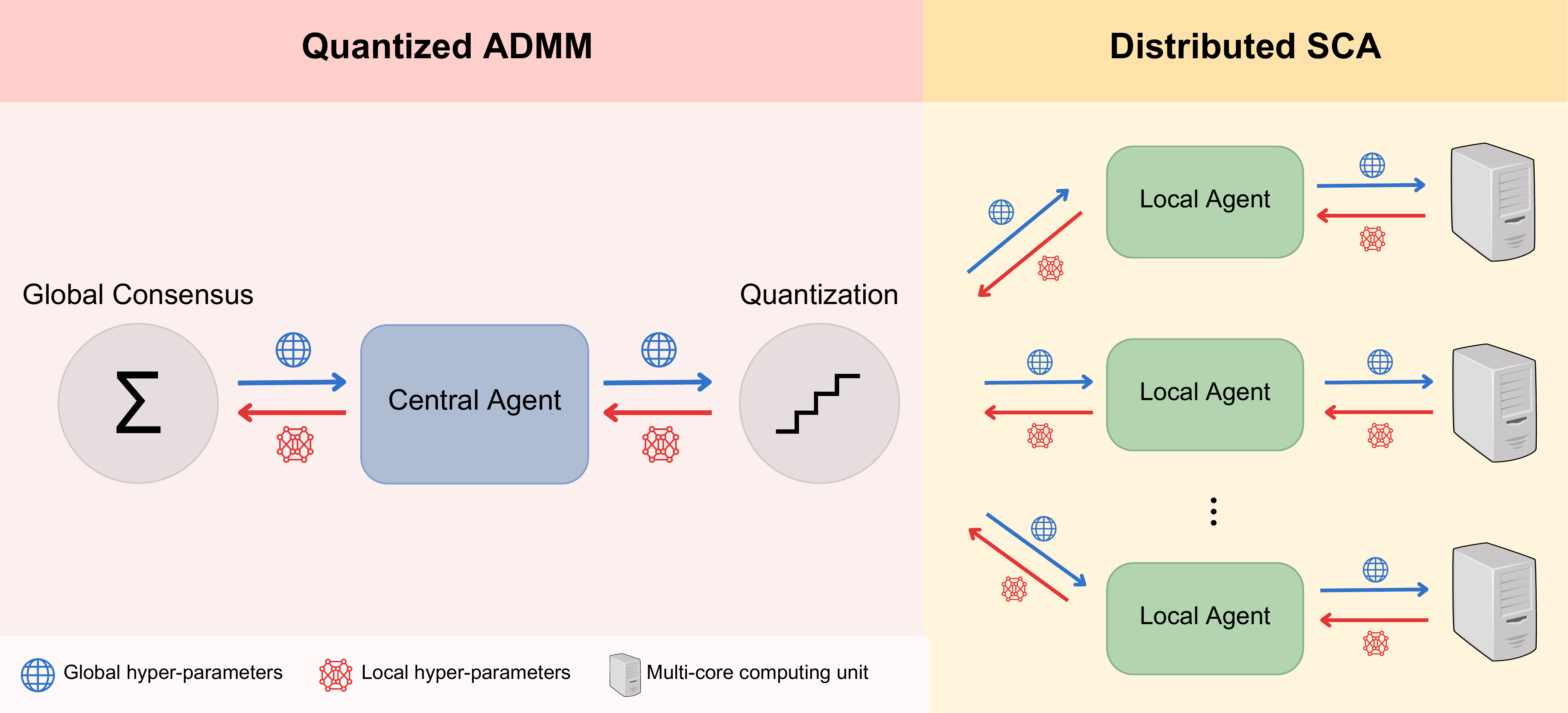}
    \caption{The proposed \textbf{Sparse Linear Multiple Kernel Learning (SLIM-KL)} framework, featuring a quantized ADMM scheme for collaborative hyper-parameter learning across multiple agents, and a distributed SCA algorithm for local optimization using multi-core computing units.}
    \label{fig:SLIM-KL}
\end{figure*}

\begin{table}[!t]
    \centering
    \caption{Summary of key notations.}
    \label{tab:notations}
    \begin{tabular}{@{}cl@{}}
    \toprule
    \textbf{Notation} & \textbf{Description} \\
    \midrule
    $n$ & Training size \\
    $P$ & Input dimension of the data \\ 
    $Q$ & Number of hyper-parameters (grid points) \\
    $N$ & Number of local agents \\
    $s$ & Number of computing units \\
    $\boldsymbol{\theta}$ & Global hyper-parameters (weights of GSMP kernel) \\
    $\boldsymbol{\zeta}_j$ & Local hyper-parameters of the $j$-th agent \\
    $\mathcal{D}_j$ & Local dataset owned by the $j$-th agent \\
    $l$ & Objective function (negative log marginal likelihood of the GP) \\
    $\tilde{l}$ & Surrogate function for the objective \\
    $\boldsymbol{\lambda}_j$ & Dual variable in ADMM \\
    $\rho_j$ & Penalty parameter in ADMM \\
    $\mathcal{Q}$ & Stochastic quantization function \\
    $\Delta$ & Quantization resolution \\
    \bottomrule
    \end{tabular}
\end{table}

\subsection{Quantized ADMM}
\label{subsec:quantized-ADMM}
ADMM has emerged as a promising algorithm for developing a principled distributed learning framework \cite{boydDistributedOptimizationStatistical2011}, enabling multiple agents solve large-scale distributed GPR problems \cite{xuWirelessTrafficPrediction2019}. In this work, we consider a variant of ADMM that incorporates quantization \cite{zhuQuantizedConsensusADMM2016a}, as illustrated in Fig.~\ref{fig:SLIM-KL} (Quantized ADMM). Quantization is a widely used technique in wireless systems to compress information by reducing transmission bits \cite{zhaoParticle2015, jinDithering2015}. Mathematically, consider $N \!\in\! \mathbb{N}$ multi-core agents cooperatively learning a global GPR model with the proposed GSMP kernel, the learning problem can be formulated as, 
    \begin{equation}
    \label{eq:disGP}
        \begin{aligned}
    	& \min  \,  \sum_{j=1}^{N} l(\bzeta_j; \mathcal{D}_j) 	\\
    	& ~ \operatorname{s.t.} \ \bzeta_j - \btheta = \bm{0}, \ j \in \{1,\ldots, N\},\\
    	& \  \qquad \btheta \in \Theta, \ \bzeta_j \in \Theta, \ j \in \{1,\ldots, N\},
        \end{aligned}
    \end{equation}
where $\mathcal{D}_j$ denotes local dataset owned by the $j$-th agent with $|\mathcal{D}_j| = n/N$, $\bzeta_j$ denotes the local hyper-parameters, and $\btheta$ denotes the global hyper-parameters (the weights of GSMP kernel). In this setup, we assume that each agent has the ability to store and perform computations with infinite precision. However, when it comes to transmission, all agents are constrained to sending quantized data that can be received without any error. 

Let $\boldsymbol{\theta}^t$ denote the hyper-parameters at iteration $t$. Suppose for each element $i \in \{1, \ldots, Q\}$ of $\boldsymbol{\theta}^t$, it holds that $\theta_i^t \in [\theta_{\mathrm{min}}^t, \theta_{\mathrm{max}}^t]$, where $\theta_{\mathrm{min}}^t$ and $\theta_{\mathrm{max}}^t$ denote the smallest and largest entries of $\boldsymbol{\theta}^t$, respectively.  We aim to obtain a quantized value of $\theta_i^t$ with length $l_t$ bits. We define $L_t = 2^{l_t}$ quantization points given by the set $\Lambda = \{\tau_1, \ldots, \tau_L\}$, where $\tau_1 = \theta_{\mathrm{min}}^t$ and $\tau_L = \theta_{\mathrm{max}}^t$. The points in this set are uniformly spaced such that $\Delta = \tau_{j + 1} - \tau_j$ for $j \in \{1, \ldots, L_t -1\}$ \cite{doan2020fast}. It follows that, $\Delta = (\theta_{\mathrm{max}}^t - \theta_{\mathrm{min}}^t)/({2^{l_t} - 1})$. Here, $\Delta > 0$ denotes the quantization resolution, where smaller $\Delta$ allows more precise transmissions but requires more bits and vice versa. Then, for $\theta_i^t \in \ [\tau_j, \tau_{j + 1})$, we can quantize $\theta_i^t$ using the following stochastic quantization function \cite{xiaoDecentralizedEstimationInhomogeneous2005},
\begin{align}\label{def:stochastic-quantization}
    \mathcal{Q}(\theta_i^t) =
\begin{cases}
\tau_j, & \text{with probability} \ 1 - (\theta_i^t - \tau_j)/\Delta, \\
\tau_{j + 1}, & \text{with probability} \ (\theta_i^t - \tau_j)/\Delta.
\end{cases}
\end{align}

In Section \ref{sec:analysis}, we will establish that quantized ADMM is guaranteed to converge to a stationary point, thanks to the properties of the stochastic quantization used.  

To solve the optimization problem in Eq.~(\ref{eq:disGP}), we first formulate the augmented Lagrangian as 
\begin{equation}
\begin{aligned}
& \quad \mathcal{L}(\bzeta_1, \ldots, \bzeta_N, \blambda_1, \ldots, \blambda_N, \btheta)\\
&  \triangleq \sum_{j=1}^N \left( l(\bzeta_j; \mathcal{D}_j) + \blambda_j^\top (\bzeta_j - \btheta)  + \frac{\rho_j}{2} \|\bzeta_j - \btheta \|_2^2\right),
\end{aligned}
\end{equation}
where $\blambda_{j}$ is the dual variable and $\rho_j$ is a preselected penalty parameter.  

By imposing a quantization operation on $\btheta$ and $\bzeta_j$ immediately after the $\btheta$-update and $\bzeta_j$-update, the sequential update in quantized ADMM for the $(t + 1)$-th iteration can be decomposed as
    \begin{subequations}
    \begin{align}
		 \btheta^{t+1} &=\frac{1}{N} \sum_{j=1}^{N}\left(\boldsymbol{\zeta}_{j,[\mathcal{Q}]}^{t}+\frac{1}{\rho_j} \blambda_{j}^{t}\right), \label{eq:qadmm-consensus}\\
         \btheta^{t+1}_{[\mathcal{Q}]} &= \mathcal{Q}(\btheta^{t+1}) \label{eq:qadmm-quantization1}, \\
		\boldsymbol{\zeta}_{j}^{t+1} &\! =\! \arg \min_{\boldsymbol{\zeta}_{j}} \, l\left(\boldsymbol{\zeta}_{j}; \mathcal{D}_j\right) + \left(\boldsymbol{\lambda}_{j}^{t}\right)^{\top}\left(\boldsymbol{\zeta}_{j}- \btheta^{t+1}_{[\mathcal{Q}]}\right) \nonumber \\  & \qquad \qquad + \frac{\rho_j}{2}\left\|\boldsymbol{\zeta}_{j}-\btheta^{t+1}_{[\mathcal{Q}]}\right\|_{2}^{2}, \ j \in \{1,\ldots, N \},\label{eq:qadmm-update} \\
        \bzeta^{t+1}_{j, [\mathcal{Q}]} &= \mathcal{Q}(\bzeta_j^{t+1}) \label{eq:qadmm-quantization2}, \\
		\boldsymbol{\lambda}_{j}^{t+1} &=\boldsymbol{\lambda}_{j}^{t}+\rho_j\left(\boldsymbol{\zeta}_{j, [\mathcal{Q}]}^{t+1}-\btheta^{t+1}_{[\mathcal{Q}]}\right).\label{eq:qadmm-dual}
        \end{align}
    \end{subequations}
Eq.~(\ref{eq:qadmm-consensus}) is the global consensus step, averaging the local agents' updates to form the new global hyper-parameter $\boldsymbol{\theta}^{t+1}$. Eq.~(\ref{eq:qadmm-update}) is the local update step, where each agent $j$ optimizes its local hyper-parameters $\boldsymbol{\zeta}_j$ to align with the global hyper-parameter using its local data $\mathcal{D}_j$. Eq.~(\ref{eq:qadmm-dual}) updates the dual variables $\boldsymbol{\lambda}_j$ to maintain consistency between the global and local hyper-parameters. Eq.~(\ref{eq:qadmm-quantization1}) and (\ref{eq:qadmm-quantization2}) denote the global and local quantization steps, respectively. This quantized ADMM scheme allows multiple agents to collaboratively learn the hyper-parameters while preserving data privacy and reducing communication cost. Another useful property of the quantized ADMM is that if one local agent gets stuck at a poor local minimum of the GSMP kernel hyper-parameter, the consensus step may help the agent resume from a more reasonable point in the next iteration.

\subsection{DSCA Algorithm}
\label{subsec:DSCA}
 It is worth noting that the local minimization problem formulated in Eq.~\eqref{eq:qadmm-update} is not only non-convex but also non-separable, rendering distributed computation infeasible. Hence, solving it with respect to the entire $\bzeta_j$ can be computationally demanding for large $Q$. To alleviate this computational burden, we propose the distributed SCA (DSCA) algorithm to optimize the hyper-parameters in a distributed manner, as depicted in Fig. \ref{fig:SLIM-KL} (Distributed SCA). The key idea is to split the hyper-parameters $\bzeta_j$ into $s$ blocks and optimize them in parallel across $s$ computing units, $s \!\in\! \mathbb{N}$. Concretely, by exploiting the Cartesian product structure of the feasible set $\Theta$ in the GSMP kernel, i.e., $\Theta = \Theta_1 \times \Theta_2 \times \ldots \times \Theta_s$ with $\Theta_i \subseteq \mathbb{R}^{Q/s}$, we first partition the optimization variable into $s$ blocks, $\bzeta_j = [\bzeta_{j,1,} \bzeta_{j,2}, \ldots, \bzeta_{j,s}]^\top$. Then, we can construct the associated surrogate function for $l\left(\boldsymbol{\zeta}_{j}; \mathcal{D}_j\right)$ in Eq.~\eqref{eq:qadmm-update}, that is additively separable in the blocks:
	\begin{align}
	\tilde{l}(\bzeta_j, \bzeta_j^\eta) = \sum_{i = 1}^s \tilde{l}_i (\bzeta_{j,i} ,  \bzeta_j^\eta).
	\label{eq:DSCA}
	\end{align}
	\begin{assumption}
	\label{assumption:surrogate-dsca}
 Each surrogate function $\tilde{l}_i (\bzeta_{j,i} ,  \bzeta_j^\eta): \Theta_i \times \Theta \mapsto \mathbb{R}, i \in \{1, \ldots, s\}$, satisfies the following conditions:
	\begin{enumerate}
		\item $\tilde{l}_i(\bzeta_{j,i} , \bzeta_j^\eta)$ is strongly convex on $\Theta_i$;
		\item $\tilde{l}_i(\bzeta_{j,i} ,  \bzeta_j^\eta)$ is differentiable with 
		\begin{equation}
                \nabla_{\bzeta_{j,i}} \tilde{l}_i(\bzeta_{j,i}, \bzeta_j^\eta) = \nabla_{\bzeta_{j,i}} l(\bzeta_j)\big|_{\bzeta_j = \bzeta_j^\eta}.
		\end{equation}
	\end{enumerate}
	\end{assumption}
To fulfill the two conditions given in Assumption \ref{assumption:surrogate-dsca}, we employ a similar strategy as the vanilla SCA, and construct $\tilde{l}_i (\bzeta_{j,i} ,  \bzeta_j^\eta)$ in Eq.~(\ref{eq:DSCA}) as,
	\begin{equation}
    \label{eq:dsca-surrogate}
	\begin{aligned}
	\tilde{l}_i(\bzeta_{j,i}, \bzeta_j^\eta) \!=\! g(\bzeta_{j,i}, \bzeta_{j, -i}^\eta) - h(\bzeta_j^\eta) - \nabla_{\bzeta_{j,i}} h(\bzeta_j^\eta)^\top (\bzeta_{j,i} - \bzeta_{j,i}^\eta),
	\end{aligned}
	\end{equation}
	where 
	\begin{equation}
	\nabla_{\bzeta_{j,i}} h(\bzeta_j^\eta) = \left.\begin{bmatrix}
	-\operatorname{Tr}\left( \boldsymbol{C}^{-1} (\bzeta_j) \frac{ \partial \boldsymbol{C} (\bzeta_j)}{\partial \zeta_{j, 1+(i-1)Q/s}} \right)\\ 
	-\operatorname{Tr}\left( \boldsymbol{C}^{-1} (\bzeta_j) \frac{ \partial \boldsymbol{C} (\bzeta_j)}{\partial \zeta_{j,2+(i-1)Q/s}} \right) \\
	\vdots\\
	-\operatorname{Tr}\left( \boldsymbol{C}^{-1} (\bzeta_j) \frac{ \partial \boldsymbol{C} (\bzeta_j)}{\partial \zeta_{j, iQ/s} } \right)
	\end{bmatrix}\right|_{\bzeta_j = \bzeta_j^\eta},
	\nonumber
	\end{equation}
	and 
	$\bzeta_{j, -i}^\eta = \left[\bzeta_{j, 1}^\eta, \ldots, \bzeta_{j, i - 1}^\eta, \bzeta_{j, i + 1}^\eta, \ldots, \bzeta_{j, s}^\eta\right]^\top$. 
 The DSCA algorithm solves the following convex optimization problem in parallel for all $i \in \{1, \ldots, s\}$,
    \begin{equation}
    \label{eq:dsca_problem}
        \begin{aligned}
        {\bzeta}_{j, i}^{\eta+1} = \arg \min_{\bzeta_{j, i}} \, \ell(\bzeta_{j, i},  \bzeta_j^\eta),
        \end{aligned} 
    \end{equation}
    where $\ell(\bzeta_{j, i}, \bzeta_j^\eta) \! \triangleq \! \tilde{l}_i(\bzeta_{j, i}, \bzeta_j^\eta) \!+\! \left(\boldsymbol{\lambda}_{j, i}^{t}\right)^{\top} \!\! \left(\bzeta_{j, i} \!-\! \btheta_i^{t+1}\right) \!+\! \frac{\rho_j}{2}\left\|\bzeta_{j, i}-\btheta_i^{t+1}\right\|_{2}^{2}$. However, it has been observed that directly using CVX \cite{cvx}, a package designed for constructing and solving convex programs, to solve Eq. (\ref{eq:dsca_problem}) is computationally demanding. Since the term $\tilde{l}_i(\bzeta_{j, i}, \bzeta_j^\eta)$ in Eq. (\ref{eq:dsca_problem}) contains a matrix fractional component, we can reformulate the problem into a conic form, by introducing auxiliary variables $\bm{z}, \bm{w}, v$:
\begin{align} \label{eq:socp_d2sca_conic}
        &\min_{\bzeta_{j, i}, \bm{z}, \bm{w}, v} \, (\bm{1}^\top \bm{z}) + \left(\blambda_{j, i}^t - \nabla_{\bzeta_{j, i}} h(\bzeta_j^\eta)\right)^\top \bzeta_{j, i} +\frac{\rho_j}{2} v \nonumber \\
        &\text{s.t.} \, \enspace \left\| 
    \begin{bmatrix}
    2\bm{w}_0 ,\\
    z_0 - 1
    \end{bmatrix}
    \right\|_2
    \leq z_0 + 1 \nonumber ,\\
        & \hspace{4.15ex} \left\| 
    \begin{bmatrix}
    2\bm{w}_k ,\\
    z_k - \bzeta_{j, k  + (i - 1)Q/s}
    \end{bmatrix}
    \right\|_2
    \leq z_k + \bzeta_{j, k  + (i - 1)Q/s}, \nonumber \\
    & \hspace{30.5ex} k \in \{ 1,2, \ldots, Q/s\} \nonumber ,\\
        & \hspace{4.5ex} \bm{y} = \sum_{k = 1}^{Q/s} \bm{L}_{k + (i - 1)Q/s} \bm{w}_k \nonumber \\ 
        & \hspace{5ex} + \left( \sum_{n \neq i}\sum_{k = 1}^{Q/s} \bzeta_{j, k  + (n - 1)Q/s}^\eta \bm{K}_{k  + (n - 1)Q/s} + \sigma_e^2 \bm{I}_n \right)^{\frac{1}{2}} \!  \bm{w}_0 \nonumber ,\\
        & \hspace{4.75ex}  2v \frac{1}{2} \geq \sum_{k = 1}^{Q/s} \left(\bzeta_{j, k + (i - 1)Q/s}-\btheta_{k + (i - 1)Q/s}^{t+1}\right)^2 \nonumber ,\\
        & \hspace{4.75ex} \bzeta_{j, i} \geq \bm{0}, \enspace \bm{z} \geq \bm{0}, \enspace v \geq 0.
\end{align}
We provide the detailed steps for this reformulation in Appendix \ref{appendix:reformulation}. The problem presented in Eq. (\ref{eq:socp_d2sca_conic}) is now in conic form and can be solved more efficiently using MOSEK \cite{mosek, yinLinearMultipleLowRank2020}. After the local agents have optimized their local hyper-parameters using DSCA in the $t$-th iteration, the ADMM consensus step formulated in Eq. (\ref{eq:qadmm-consensus}) is conducted to reach a global consensus. 
At this point, we have introduced the SLIM-KL framework. The pseudocode for implementing  SLIM-KL is summarized in Algorithm \ref{alg:SLIM-KL}.

\begin{remark}
    \label{remark:complexity}
    By utilizing $N$ local agents and $s$ computing units, the overall computational complexity of SLIM-KL at each iteration is reduced from $\mathcal{O}(Qn^3)$ to $\mathcal{O}(\frac{Qn^3}{s N^3})$. That is, as the number of agents $N$ and computing units $s$ increase, the computational burden of optimizing the hyper-pameters can be significantly reduced, especially when dealing with large number of hyper-parameters $Q$ and training samples $n$.
\end{remark}

\begin{remark}
\label{remark:slim-kl}
The design of SLIM-KL matches perfectly with the sparse nature of the proposed GSMP kernel, see Theorem \ref{thm:sparsity}. Many weights/hyper-parameters of the sub-kernels in the GSMP kernel are inherently zero or near-zero valued, so quantization can enhance this sparsity by rounding even more of these weights to zero. This enhanced sparsity could aid convergence and guide the algorithm toward a favorable local minimum, as noted in \cite{zhuQuantizedConsensusADMM2016a}. 
\end{remark}

\begin{algorithm}[!t]
    \caption{SLIM-KL for LMK learning}\label{alg:SLIM-KL}
    \textbf{Initialization}: $t=0$, $\{ \rho_j, \bm{\zeta}_j^0, \bm{\lambda}_j^0\}_{j=1}^N$,
    $N$ local datasets $\{\mathcal{D}_j\}_{j=1}^N$ in $N$ local computing agents, quantization function $\mathcal{Q}(\cdot)$;\\
    \While{Iteration stopping criteria are not satisfied}{
        Obtain the global hyper-parameters $\bm{\theta}^{t+1}$ by Eq. (\ref{eq:qadmm-consensus});

        Apply quantization on the global hyper-parameters using Eq. (\ref{eq:qadmm-quantization1}) and obtain $\btheta^{t+1}_{[\mathcal{Q}]}$; 
        
        \For{$j \in [1, N]$}{
            Obtain the local hyper-parameters $\bm{\zeta}_j^{t+1}$ by utilizing DSCA to solve Eq. (\ref{eq:socp_d2sca_conic}) in parallel using $s$ units;

            Apply quantization on the local hyper-parameters using Eq. (\ref{eq:qadmm-quantization2}) and obtain $\bzeta^{t+1}_{j, [\mathcal{Q}]}$; 
        }
        
        Obtain the dual variables by Eq. (\ref{eq:qadmm-dual});		
    }
    \textbf{return} $\boldsymbol{\theta}^t$
\end{algorithm}

\section{Analysis}
\label{sec:analysis}
In this section, we present the theoretical analysis of the SLIM-KL framework proposed in Section \ref{sec:SLIM-KL}. First, we show that the DSCA algorithm solving the local minimization problem within quantized ADMM is guaranteed to converge, which is succinctly summarized as follows.
\begin{theorem}
\label{theorem:dsca_convergence}
     By employing DSCA, the objective of the local minimization problem, see Eq.~\eqref{eq:qadmm-update}, is non-increasing, and the sequence of feasible points $\{\bzeta_j^\eta\}_{\eta \in \mathbb{N}}$ satisfies,
    \begin{equation}
    \label{eq:dsca_cvg}
        \lim_{\eta \to \infty} \| \bzeta_j^{\eta + 1} - \bzeta_j^\eta \| = 0.
    \end{equation}
\end{theorem}
\begin{proof}
    The proof can be found in Appendix \ref{appendix:proof_theorem_dsca_coverg}.
\end{proof}
Theorem \ref{theorem:dsca_convergence} suggests that within the iteration of quantized ADMM, the local minimization problem is solved exactly \cite{hongConvergenceAnalysisAlternating2016}. Building upon the result from Theorem \ref{theorem:dsca_convergence}, we can show that the quantized ADMM-based SLIM-KL framework is guaranteed to reach a stationary point in a mean sense, according to the following theorem.
\begin{theorem}
\label{theorem:slim-kl-convergence}
    The SLIM-KL framework ensures that, under the stochastic quantization scheme, the sequence $\{\bzeta_j^t\}_{j = 1}^N$ converges to $\btheta^t$ in the mean sense as $t \to \infty$, with the variance of $\bzeta_j^t$ stabilizing at a finite value which depends on the quantization resolution $\Delta$. 
\end{theorem}
\begin{proof}

We first refer to the following lemma,
\begin{lemma}
\label{lemma:stochastic-quantization}
    With stochastic quantization, each $\theta \in \mathbb{R}$ can be unbiasedly estimated as
    \begin{align}
        \mathbb{E}[\mathcal{Q}(\theta)] = \theta,
    \end{align}
    and the associated quantization error is bounded by
    \begin{align}
        \mathbb{E}\left[(\theta - \mathcal{Q}(\theta))^2\right] \leq \frac{\Delta^2}{4}.
    \end{align}
\end{lemma}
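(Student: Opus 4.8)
The plan is to exploit the fact that for $x \in [m\Delta,(m+1)\Delta)$ the quantized value $\mathcal{Q}(x)$ is a two-point random variable supported on the endpoints of the cell containing $x$, so the whole argument reduces to elementary Bernoulli-type algebra. First I would introduce the normalized fractional offset $\alpha \triangleq x/\Delta - m \in [0,1)$, so that $x = (m+\alpha)\Delta$ and, by the definition in Eq.~\eqref{def:stochastic-quantization}, $\mathcal{Q}(x) = m\Delta$ with probability $1-\alpha$ and $\mathcal{Q}(x) = (m+1)\Delta$ with probability $\alpha$. I would note in passing that these two probabilities are nonnegative and sum to one precisely because $\alpha \in [0,1)$, so $\mathcal{Q}$ is well defined; the boundary case $x = m\Delta$ corresponds to $\alpha = 0$, where $\mathcal{Q}(x) = x$ deterministically, consistent with the formulas below.

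For unbiasedness I would simply compute $\mathbb{E}[\mathcal{Q}(x)] = (1-\alpha)\,m\Delta + \alpha\,(m+1)\Delta = (m+\alpha)\Delta = x$, establishing the first claim. For the error bound, since $\mathcal{Q}(x)$ is unbiased the mean-squared error equals the variance, $\mathbb{E}[(x-\mathcal{Q}(x))^2] = \operatorname{Var}(\mathcal{Q}(x))$. Writing $x - m\Delta = \alpha\Delta$ and $x - (m+1)\Delta = -(1-\alpha)\Delta$, I obtain $\mathbb{E}[(x-\mathcal{Q}(x))^2] = (1-\alpha)(\alpha\Delta)^2 + \alpha\big((1-\alpha)\Delta\big)^2 = \alpha(1-\alpha)\Delta^2\big[\alpha + (1-\alpha)\big] = \alpha(1-\alpha)\Delta^2$. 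Finally, maximizing the scalar function $\alpha \mapsto \alpha(1-\alpha)$ over $[0,1]$ yields the value $1/4$ at $\alpha = 1/2$, hence $\mathbb{E}[(x-\mathcal{Q}(x))^2] \le \Delta^2/4$, which is the second claim.

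There is essentially no substantive obstacle in this proof; the only care needed is the half-open-interval bookkeeping (ensuring the cell endpoints and the convention at exact lattice points are handled cleanly) and stating explicitly that unbiasedness is what lets the MSE be identified with the variance. I would therefore present it as a short, self-contained computation rather than invoking any external result.
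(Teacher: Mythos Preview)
Your proof is correct and follows essentially the same approach as the paper: both compute the expectation and mean-squared error directly from the two-point distribution of $\mathcal{Q}(x)$, arriving at the product $(x/\Delta - m)(1 - x/\Delta + m)\,\Delta^2$ (your $\alpha(1-\alpha)\Delta^2$) and bounding it by $\Delta^2/4$. Your substitution $\alpha = x/\Delta - m$ is a convenient notational choice that streamlines the algebra, but the underlying argument is identical.
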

\begin{proof}
    The proof can be found in Appendix \ref{appendix:lemma-proof}.
\end{proof}
It follows that, by taking expectation on both sides of Eq. (\ref{eq:qadmm-consensus}) and Eq. (\ref{eq:qadmm-dual}) we can get,
\begin{subequations}
    \begin{align}
     \mathbb{E}\left[\btheta^{t+1}\right] &=\frac{1}{N} \sum_{j=1}^{N}\left(\mathbb{E}\left[\boldsymbol{\zeta}_{j,[\mathcal{Q}]}^{t}\right]+\frac{1}{\rho_j} \mathbb{E}\left[\blambda_{j}^{t}\right]\right), \label{eq:qadmm-consensus-exp}\\
    \mathbb{E}\left[\boldsymbol{\lambda}_{j}^{t+1} \right] &= \mathbb{E}\left[\boldsymbol{\lambda}_{j}^{t}\right] +\rho_j\left(\mathbb{E}\left[\boldsymbol{\zeta}_{j, [\mathcal{Q}]}^{t+1}\right] -\mathbb{E}\left[\btheta^{t+1}_{[\mathcal{Q}]}\right]\right).\label{eq:qadmm-dual-exp}
\end{align}
\end{subequations}
Note that Lemma \ref{lemma:stochastic-quantization} implies that $\mathbb{E}[\boldsymbol{\zeta}_{j, [\mathcal{Q}]}^{t+1}] = \bzeta_j^{t + 1}$ and $\mathbb{E}[\btheta^{t+1}_{[\mathcal{Q}]}] = \btheta^{t + 1}$. Since $\mathbb{E}[\blambda_j^0] = \blambda_j^0$, it is easy to see that Eq. (\ref{eq:qadmm-consensus-exp}) and Eq. (\ref{eq:qadmm-dual-exp}) takes exactly the same iterations in the mean sense as the consensus ADMM, i.e., ADMM without quantization scheme. Therefore, the convergence in mean of \(\{\bzeta_j^t\}_{j = 1}^N\) to \(\btheta^t\) is guaranteed due to the convergence of consensus ADMM established in \cite{boydDistributedOptimizationStatistical2011, dengGlobalLinearConvergence2016, zhuQuantizedConsensusADMM2016a}. Moreover, let $\boldsymbol{\epsilon}^t = \boldsymbol{\zeta}^t_j - \mathcal{Q}(\boldsymbol{\zeta}^t_j)$ denote the quantization error of $\boldsymbol{\zeta}^t_j$. By Lemma \ref{lemma:stochastic-quantization}, we can show that it satisfies: $
\mathbb{E}\left[\boldsymbol{\epsilon}^t\right] = 0 \quad \text{and} \quad \mathrm{Var}\left(\boldsymbol{\epsilon}^t\right) \leq \Delta^2/4.$
That is, the quantization error $\boldsymbol{\epsilon}^t$ introduced at each iteration has an expected value of zero and a variance that is bounded by $\Delta^2/4$. As the quantization resolution $\Delta$ decreases, the variance of the quantization error also decreases, enhancing the convergence rate of the algorithm. Conversely, larger $\Delta$ increases the variance bound, potentially slowing convergence. Thus, as $t \to \infty$, the variance of $\bzeta_j^t$ stabilizes to a finite value that depends on the quantization resolution $\Delta$ \cite{zhuQuantizedConsensusADMM2016a}. This completes the proof for Theorem \ref{theorem:slim-kl-convergence}.
\end{proof}

\section{Experiments}
\label{sec:experiments}

In this section, we present experimental validations for the kernel formulation and learning framework proposed in Sections \ref{sec:GSM-MISO} and \ref{sec:SLIM-KL}. The efficiency and approximation capability of the proposed GSMP kernel are demonstrated in Section \ref{subsec:gsmp-vs-gsm}, followed by a comparative analysis between our method and some competing approaches in Section \ref{subsec:prediction}. Sections \ref{subsec:scalability} and \ref{subsec:quantization} highlight the scalability and communication efficiency of our learning framework. Lastly, Section \ref{subsec:lstf} demonstrates the performance of our method on real world long time series forecasting tasks. Our implementation code is accessible at \href{https://github.com/richardcsuwandi/SLIM-KL}{https://github.com/richardcsuwandi/SLIM-KL}.

\subsection{Approximation Capability of GSMP Kernel}
\label{subsec:gsmp-vs-gsm}
This subsection demonstrates the approximating capabilty of the GSMP kernel discussed in Section \ref{subsec:gsmp-properties}. Specifically, we empirically validate the capability of the GSMP kernel to maintain a good approximation while reducing the number of hyper-parameters, as highlighted in Example \ref{example:1}. We generate synthetic datasets with known underlying spectral densities and compare the performance of the GSMP kernel against the original GSM kernel proposed in \cite{suwandiGaussianProcessRegression2022}. The generated datasets consist of 500 samples from a GP with the SM kernel. The GP inputs are uniformly distributed within the domain $[0, 10]^2$, with 480 samples used for training and 20 samples reserved for testing. We compare the GP performance with the GSM and GSMP kernels under different configurations as outlined below.

\begin{itemize}
    \item \textbf{Configuration 1}: The SM kernel has an underlying four-mode Gaussian mixture spectral density, $S(\bm{\omega}) = \sum_{i=1}^4 \alpha_i \mathcal{N}(\bm{\mu}_i, \bm{V}_i)$, where $\alpha_1 = \ldots = \alpha_4 = 5$, $\bm{\mu}_1 = [-2,-2]^\top$, $\bm{\mu}_2 = [-2,2]^\top$, $\bm{\mu}_3 = [2,-2]^\top, \bm{\mu}_4 = [2,2]^\top$, and $\bm{V}_1 = \ldots = \bm{V}_4 = 0.1 \bm{I}$.
    \item \textbf{Configuration 2}: The SM kernel has an underlying four-mode Gaussian mixture spectral density, $S(\bm{\omega}) = \sum_{i=1}^4 \alpha_i \mathcal{N}(\bm{\mu}_i, \bm{V}_i)$, where $\alpha_1 = \ldots = \alpha_4 = 5$, $\bm{\mu}_1 = [-2,-2]^\top$, $\bm{\mu}_2 = [-3,3]^\top$, $\bm{\mu}_3 = [3,-3]^\top, \bm{\mu}_4 = [2,2]^\top$, and $\bm{V}_1 = \ldots = \bm{V}_4 = 0.1 \bm{I}$.
    \item \textbf{Configuration 3}: The SM kernel has an underlying two-mode Gaussian mixture spectral density, $S(\bm{\omega}) = \sum_{i=1}^2 \alpha_i \mathcal{N}(\bm{\mu}_i, \bm{V}_i)$, where $\alpha_1 = \alpha_2 = 5$, $\bm{\mu}_1 = [-2,-2]^\top$, $\bm{\mu}_2 = [2,2]^\top$, and $\bm{V}_1 = \bm{V}_2 = 0.1 \bm{I}$.
    \item \textbf{Configuration 4}: The SM kernel has an underlying four-mode Gaussian mixture spectral density, $S(\bm{\omega}) = \sum_{i=1}^4 \alpha_i \mathcal{N}(\bm{\mu}_i, \bm{V}_i)$, where $\alpha_1 = \alpha_4 = 15$, $\alpha_2 = \alpha_3 = 5$  $\bm{\mu}_1 = [-2,-2]^\top$, $\bm{\mu}_2 = [-3,3]^\top$, $\bm{\mu}_3 = [3,-3]^\top, \bm{\mu}_4 = [2,2]^\top$, and $\bm{V}_1 = \ldots = \bm{V}_4 = 0.1 \bm{I}$.
    \item \textbf{Configuration 5}: The SM kernel has an underlying six-mode Gaussian mixture spectral density, $S(\bm{\omega}) = \sum_{i=1}^6 \alpha_i \mathcal{N}(\bm{\mu}_i, \bm{V}_i)$, where $\alpha_1=\alpha_6 = 10, \alpha_2 = \alpha_5 = 10, \alpha_3 = \alpha_4 = 5$,  $\bm{\mu}_1 = [0.1,0.1]^\top$,  $\bm{\mu}_2 = [1, 1]^\top$, $\bm{\mu}_3 = [1.5, 1.5]^\top, \bm{\mu}_4 = [-1.5, -1.5]^\top$,  $\bm{\mu}_5 = [-1, -1]^\top$, $\bm{\mu}_6 = [-0.1, -0.1]^\top$, and $\bm{V}_1 = \ldots = \bm{V}_6 = 0.05 \bm{I}$.
\end{itemize}

\begin{figure}[!t]
\centering 
\subfloat[Configuration 1]{\includegraphics[width=.24\textwidth]{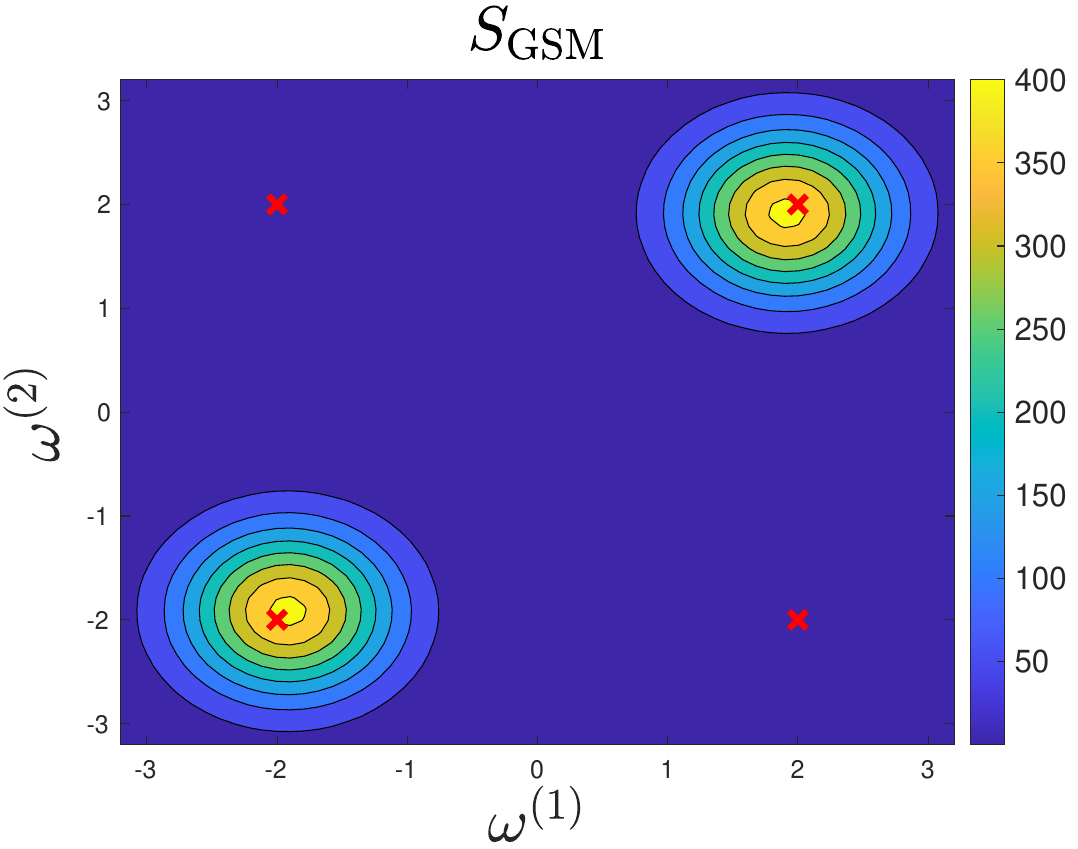}   \hfill 
\includegraphics[width=.24\textwidth]{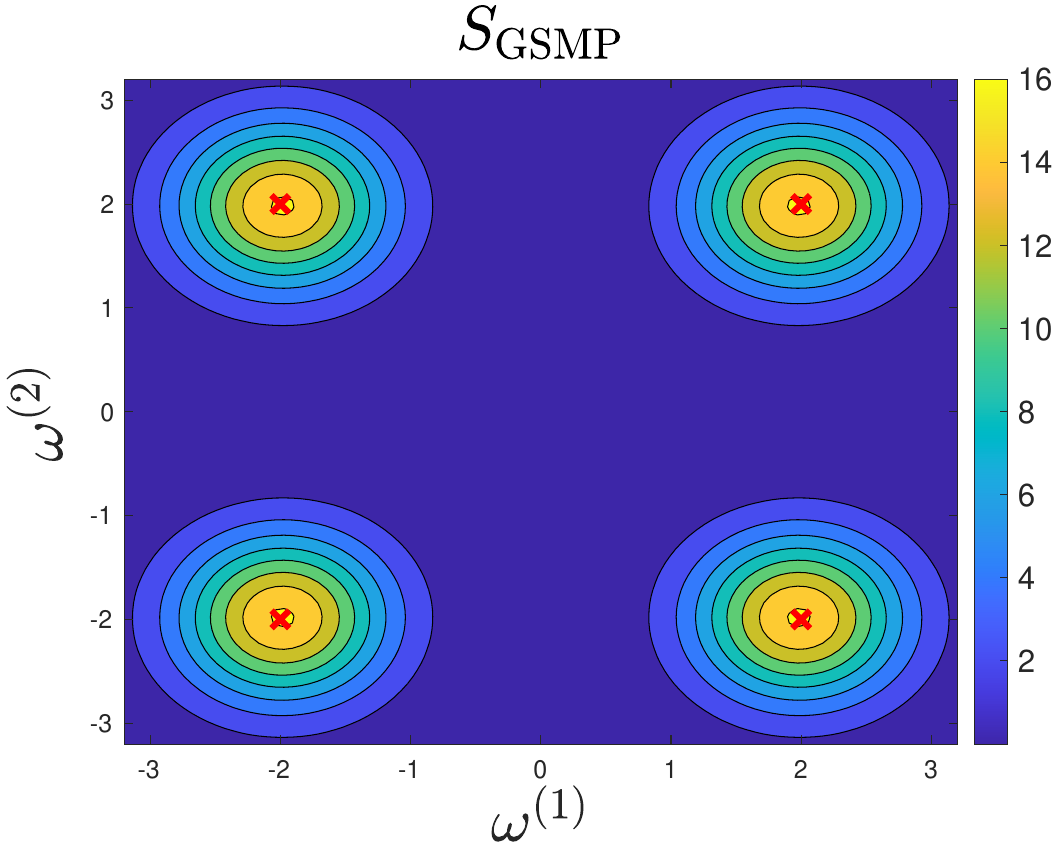}} 

\subfloat[Configuration 2]{\includegraphics[width=.24\textwidth]{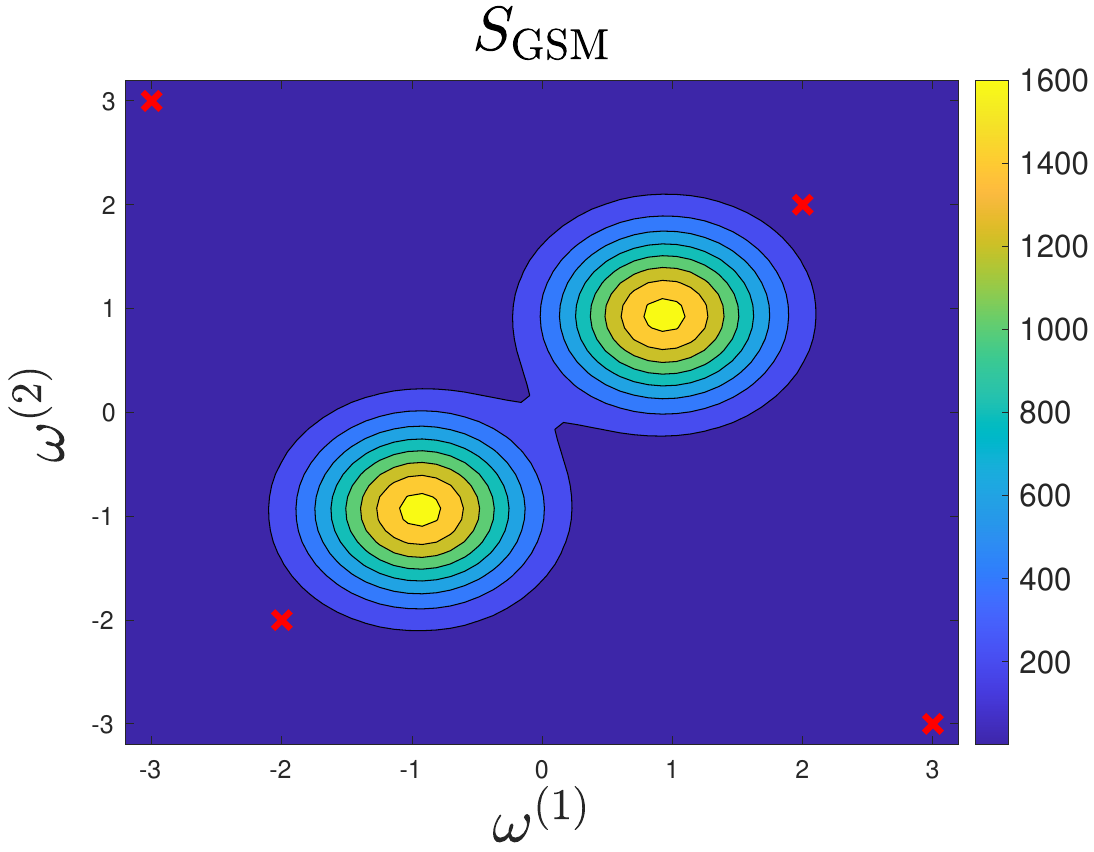}  \hfill 
\includegraphics[width=.24\textwidth]{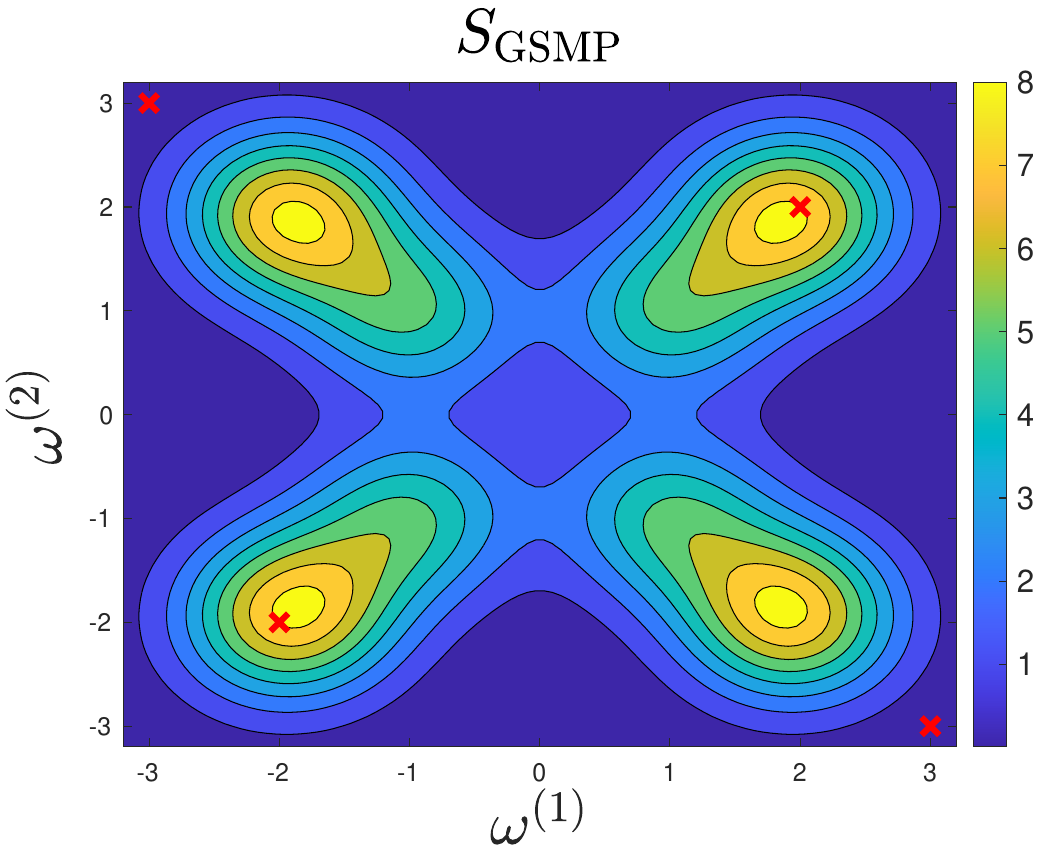}} 

\subfloat[Configuration 3]{\includegraphics[width=.24\textwidth]{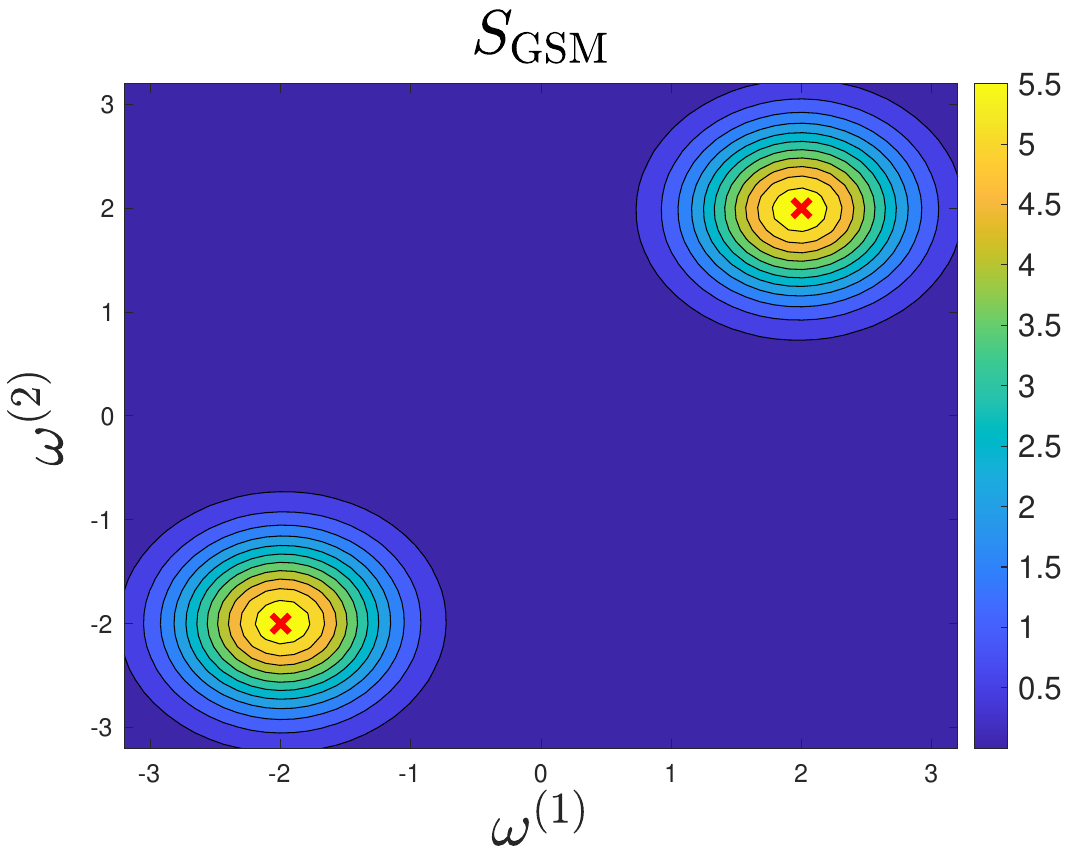}  \hfill 
\includegraphics[width=.24\textwidth]{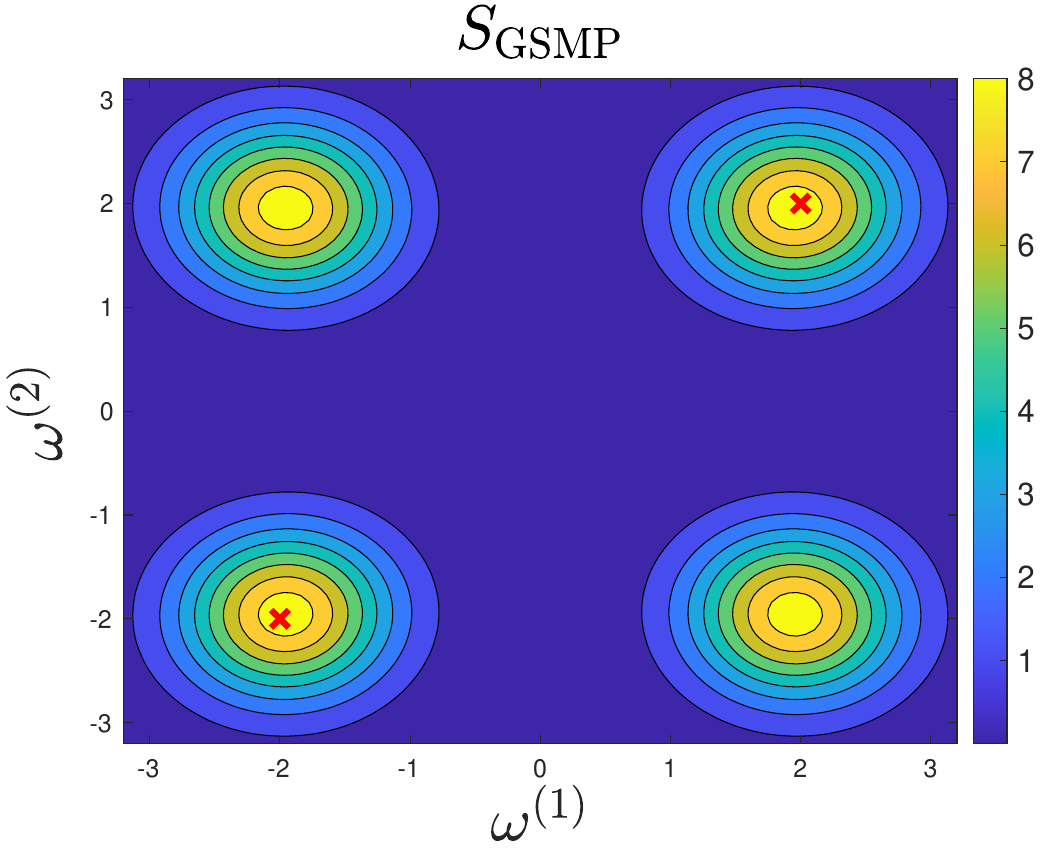}} 

\subfloat[Configuration 4]{\includegraphics[width=.24\textwidth]{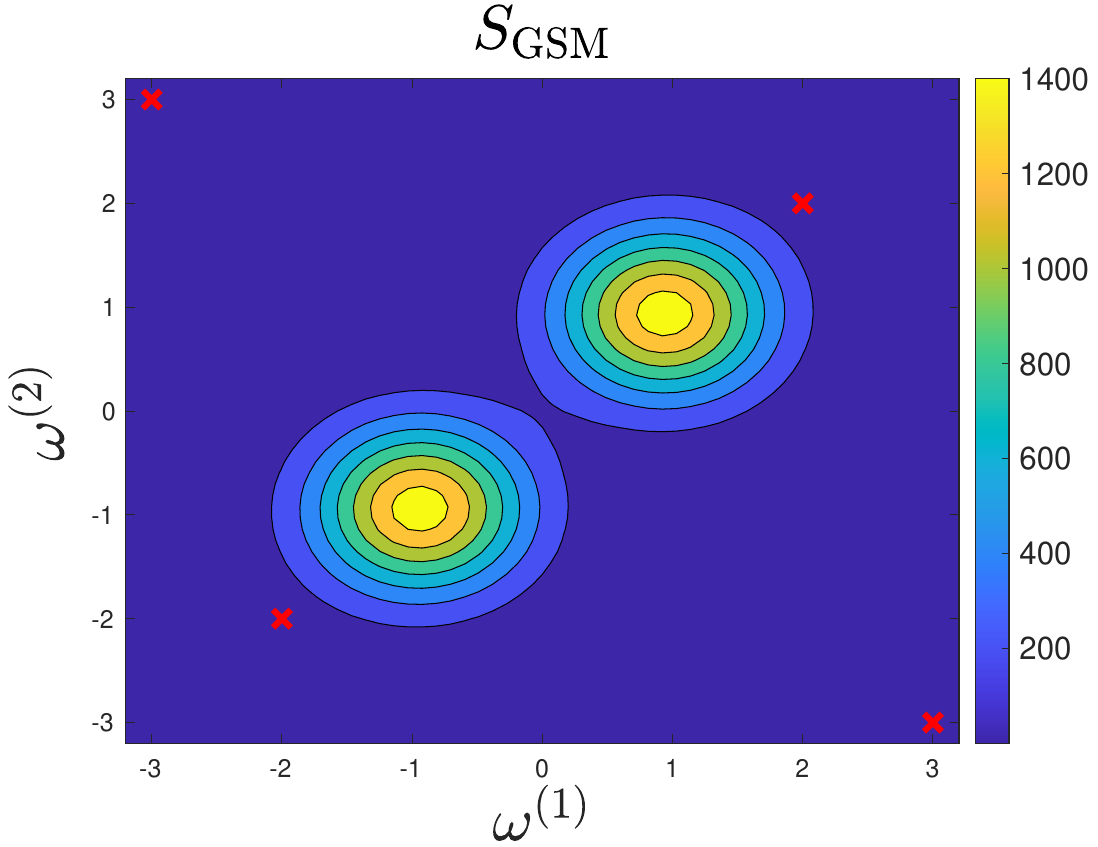}  \hfill 
\includegraphics[width=.24\textwidth]{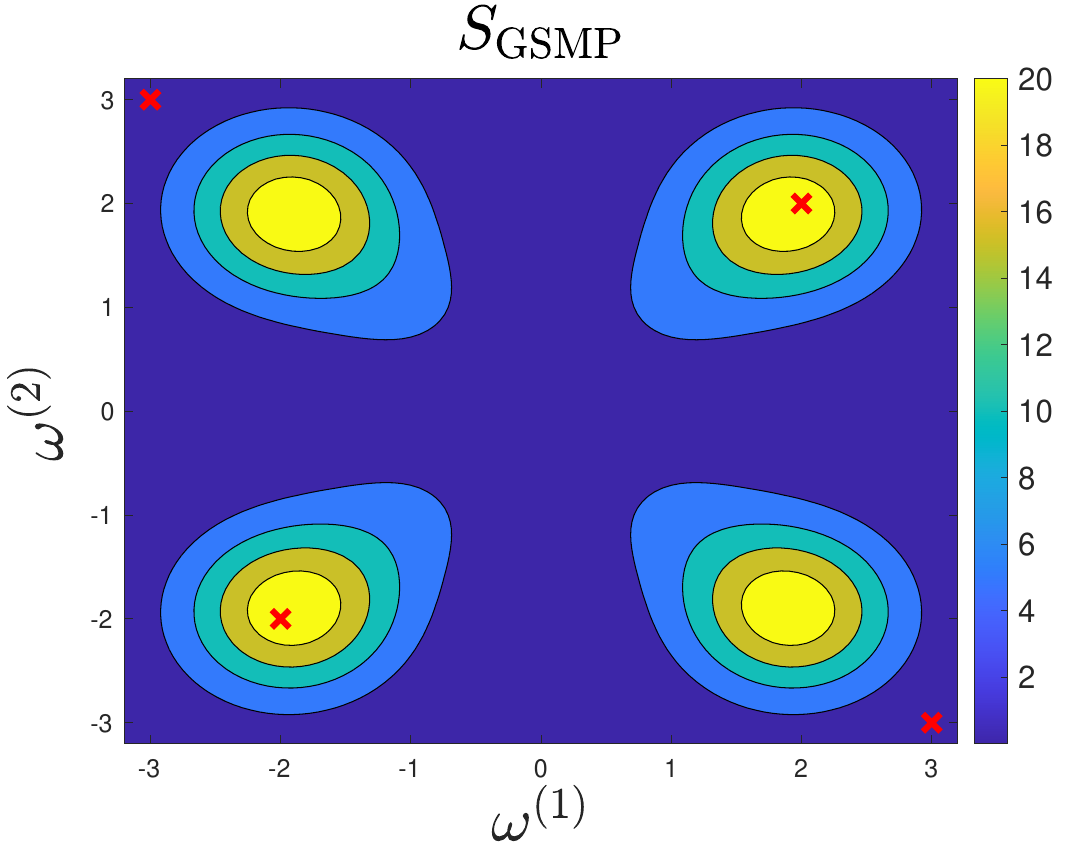}} 

\subfloat[Configuration 5]{\includegraphics[width=.24\textwidth]{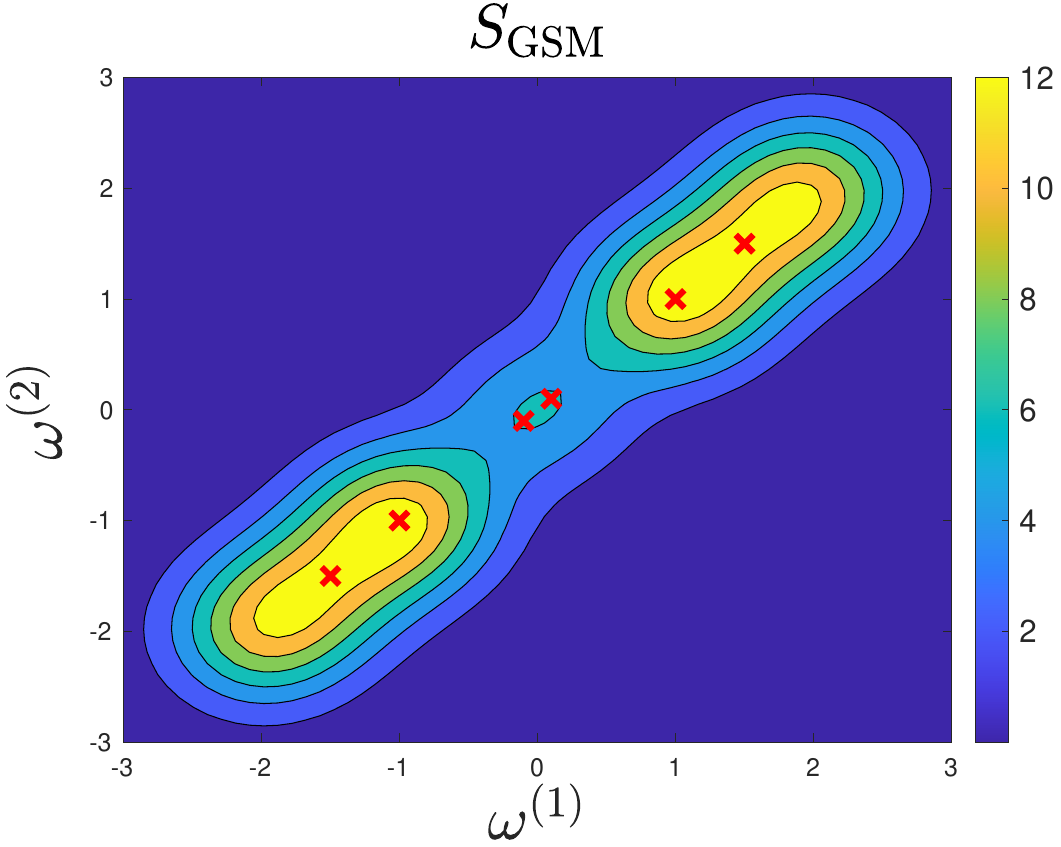}  \hfill 
\includegraphics[width=.24\textwidth]{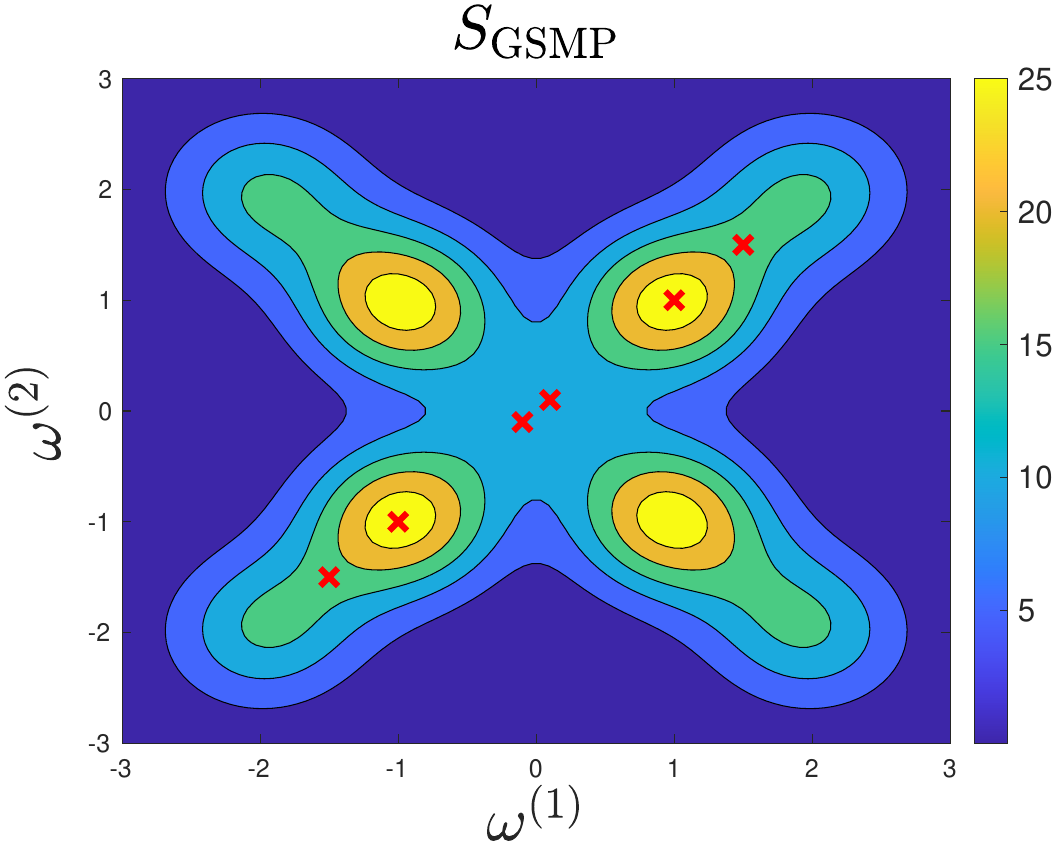}} 
    \caption{\label{fig:gsmgp-vs-gsm} The learned spectral density of multi-dimensional GSM kernel (left) versus the learned spectral density of GSMP kernel (right). The cross symbols represent the modes of the ground truth.}
\end{figure}

For the comparison, we set $Q = 50$ and sample the grid points from only the positive quadrant, that is, $\mu_q^{(p)}$ are uniformly sampled from $[0, 3]$ and the variance variables $v_q^{(p)}$ are set to $0.01$. 
The spectral density results of the learned kernels are depicted in Fig. \ref{fig:gsmgp-vs-gsm}, and the GP prediction results are presented in Table \ref{tab:gsmpVSgsm}.  

From the results of configurations 1, 2, and 4 in Fig.~\ref{fig:gsmgp-vs-gsm}, we observe that when the density modes of the underlying kernel lie along the main diagonal, the GSM kernel struggles to represent this density accurately. This is because the GSM kernel's fixed grid points support the density on the anti-diagonal but not the density on the main diagonal. In contrast, the GSMP kernel contains additional $2Q$ Gaussian densities, enabling it to capture the main diagonal's density effectively. This superior fitting of the underlying density translates into GSMP kernel's enhanced prediction performance, as corroborated by the results shown in Table \ref{tab:gsmpVSgsm}. If one wants to compensate for the performance deficiencies of the GSM kernel, it requires fixing additional $Q$ grid points, totaling $2Q$ grid points. Conversely, the GSMP kernel achieves comparable performance while only requiring $Q$ grid points. In configurations 3 and 5 of Fig.~\ref{fig:gsmgp-vs-gsm}, where the density pattern aligns with the anti-diagonal, the GSM kernel approximates this density adequately. On the other hand, the GSMP kernel not only captures the underlying anti-diagonal density but also introduces additional density on the main diagonal. However, the GP prediction performance using the GSMP kernel, as showcased in Table \ref{tab:gsmpVSgsm}, demonstrates a marginal difference from the performance achieved with the GSM kernel. This indicate that the GSMP kernel, with its additional Gaussian densities, still achieves comparable performance without the need for extra grid points, thus maintaining a lower computational complexity.

The above empirical findings highlight that the GSMP kernel not only effectively approximates the underlying kernel, thus ensuring good GP prediction performance, but notably diminishes the number of hyper-parameters. This reduction significantly mitigates both model and computational complexities, supporting our claim in Section \ref{subsec:gsmp-properties}.

\begin{table}[t!]
\centering
\caption{Prediction  mean squared error  (MSE) of the GP with GSMP kernel versus the GP with GSM kernel \cite{suwandiGaussianProcessRegression2022}}.
\label{tab:gsmpVSgsm}
\setlength{\tabcolsep}{2mm}{
\begin{tabular}{c| ccccc}
\toprule
Configuration &  1  & 2 & 3 &  4 & 5 \\ \midrule
GSM  & 0.0172  &0.5564  &\textbf{0.0046}  & 0.9431 &  \textbf{7.2646E-03} \\
GSMP & \textbf{0.0101}  &\textbf{0.1732}  &0.0063  & \textbf{0.3769} &  8.3232E-03\\
\bottomrule
\end{tabular}
}
\end{table}

\begin{table*}[t]
\centering
\caption{Details of the selected real datasets.
}
\label{tab:datasets}
\begin{tabular}{@{}rcccccccccccccc@{}}
\toprule
Dataset                & ECG & CO2  & Unemployment &  ALE  & CCCP  & Toxicity & Concrete & Wine & Water \\ \midrule
Training Size     & 680 & 481  & 380 & 80 & 1000      & 436      & 824   & 1279 & 5000 \\
Test Size & 20  & 20  & 20           & 20 & 250       & 110      & 206      & 320 & 100  \\
 Dimension                & 1   & 1  & 1            & 4  & 4         & 8        & 8        & 11 & 11  \\ \bottomrule
\end{tabular}
\end{table*}

\begin{table*}[!t]
\centering
\caption{Performance comparison between the proposed GSMPGP with SLIM-KL ($s = 4, N = 2, \Delta = 0.01$) and its competitors in terms of the prediction MSE. The lowest MSE value is highlighted in \textbf{bold}.  The percentage in the brackets indicates the sparsity level of the solution. The lower the value, the sparser the solution.}
\setlength{\tabcolsep}{4mm}{
\begin{tabular}{@{}rcccccccc@{}}
\toprule
Dataset      & GSMPGP & GSMGP & SMGP  & SSGP & SEGP    & LSTM    %& ARIMA   
& Informer   \\ \midrule
ECG          & \textbf{1.2E-02} \ (0.8\%)   & \textbf{1.2E-02} \ (0.8\%)      & 1.9E-02 \ (100\%) & 1.6E-01 & 1.6E-01 & 1.6E-01 %& 1.8E-01 
&  5.4E-02          \\
CO2          & \textbf{3.7E-01} \ (0.6\%)  &  6.2E-01 \ (1.0\%)              & 7.4E-01 \ (100\%) & 2.0E+02 & 1.5E+03 & 2.9E+02 %& 4.9E+00 
& 8.4E+01           \\
Unemployment & \textbf{2.0E+03} \ (2.8\%)  & 2.2E+03 \ (4.6\%)       & 7.7E+03 \ (100\%) & 2.1E+04 & 5.6E+05 & 1.7E+05 %& 1.5E+04 
& 3.8E+03           \\
ALE          & \textbf{2.5E-02} \ (0.24\%)  & 1.9E+00 \ (0.5\%)    & 3.7E-01 \ (100\%) & 5.8E-02 & 3.7E-02 & 3.4E-02 %& NA      
& 1.8E-01           \\
CCCP         & \textbf{1.6E+01} \ (0.24\%)  &  1.9E+02 \ (0.24\%)  & 2.1E+05 \ (100\%) & \textbf{1.6E+01} & 1.7E+01 & 2.8E+02 %& NA      
& 1.4E+05   \\
Toxicity     & \textbf{1.4E+00} \ (1.13\%)   &  2.7E+02 \ (1.13\%) & 5.3E+00 \ (100\%)  & 1.7E+00 & \textbf{1.4E+00} & 1.7E+00 %& NA      
& 2.9E+00           \\
Concrete     & \textbf{5.9E+01} \ (0.13\%)   &  5.6E+02 \ (0.24\%) & 1.7E+03 \ (100\%)  & 3.5E+02 & 1.3E+02 & 1.4E+02 %& NA      
&   2.0E+02         \\
Wine         & \textbf{4.6E-01} \ (0.09\%) &    7.4E+03 \ (0.09\%)   & 3.1E+01 \ (100\%) & 5.6E-01 & 4.4E+00 & 4.7E-01 %& NA      
&   1.6E+00       \\
Water       & \textbf{1.6E-04} \ (1.27\%) & 1.7E-04 \ (1.0\%)     & 5.5E-03 \ (100\%)  & 2.7E-04 & \textbf{1.6E-04}  & 4.2E-04 %& NA    
&  9.2E-02            \\
\bottomrule
\end{tabular}}
\label{tab:gsmgp-vs-competitors}
\end{table*}

\subsection{Prediction Performance}
\label{subsec:prediction}
This subsection showcases the prediction performance of the proposed GSMP kernel-based GP (GSMPGP) trained using SLIM-KL on nine real-world datasets. The dimensionality of these datasets varies from $1$ to $11$, with more details provided in Table \ref{tab:datasets}. The setup for training the GSMPGP is listed as follows.
The number of mixture components $Q$ is set to 500 for the one-dimensional case and $100 \times P$ for the multi-dimensional case. The mean variables $\mu_q^{(p)}$ are uniformly sampled from the frequency range mentioned in Section \ref{subsec:gsmp} and the variance variables $v_q^{(p)}$ are fixed to a small constant, $0.001$. The weights $\boldsymbol{\theta}$ and the dual variables $\{\boldsymbol{\lambda}_j\}_{j = 1}^N$ are initialized to zeros. The penalty parameter $\{\rho_j\}_{j = 1}^N$ is initialized to $10^{-10}$ and adaptively updated using the residual balancing strategy \cite{wohlbergADMMPenaltyParameter2017}. The noise variance parameter $\sigma_e^2$ is estimated using a cross-validation filter type method \cite{garciaRobustSmoothingGridded2010}.

For comparison, we consider six other models, the original GSM kernel-based GP (GSMGP) \cite{suwandiGaussianProcessRegression2022}, an SM kernel-based GP (SMGP) \cite{wilsonGaussianProcessKernels2013}, a sparse spectrum GP (SSGP) \cite{lazaro2010sparse}, a squared-exponential kernel-based GP (SEGP) \cite{rasmussenGaussianProcessesMachine2006}, a long short-term memory (LSTM) recurrent neural network \cite{hochreiterLongShortTermMemory1997}, 
% an auto-regressive integrated moving average (ARIMA) model \cite{benvenutoApplicationARIMAModel2020}, 
and a very recent Transformer-based time series prediction model, Informer \cite{zhou2021informer}. The SMGP and SEGP model hyper-parameters are optimized using a gradient-descent based approach with the default parameter settings suggested in \href{https://people.orie.cornell.edu/andrew/code}{https://people.orie.cornell.edu/andrew/code}. The GSM and SM kernel use the same number of kernel components as the GSMP kernel ($Q = 500$ or $Q = 100 \times P$) for a fair comparison. The SSGP model has 500 basis functions with hyper-parameters determined using a conjugate-gradient method. The LSTM model has a standard architecture with 3 hidden layers, each with 100 hidden units while the Informer model follows the default configuration as specified in \href{https://github.com/zhouhaoyi/Informer2020/tree/main}{https://github.com/zhouhaoyi/Informer2020/tree/main}.

Table \ref{tab:gsmgp-vs-competitors} shows the prediction performance of various models as measured by the mean-squared-error (MSE). These results demonstrate that the proposed GSMPGP consistently outperform their competitors in prediction MSE across all datasets. Notably, GSMPGP outperforms the original GSMGP, particularly in multi-dimensional datasets, aligning with our findings in Section \ref{subsec:gsmp-vs-gsm}. While classical models like LSTM, and more recent models like Informer, require long time series data to learn the underlying patterns effectively, our GSMPGP model excels by building the correlations between series data and minimizing the negative marginal log-likelihood to optimize the kernel hyper-parameters. Furthermore, our GSMPGP model outperforms both SMGP and SSGP, as well as other GP models that use basic kernels like SEGP, highlighting its superior performance compared to competitors. In addition to superior prediction performance, our proposed method enhances scalability and communication-efficiency through distributed learning and quantization, as detailed in Sections \ref{subsec:scalability} and \ref{subsec:quantization}, respectively.

The improved prediction performance not only comes from the flexible model capacity of GSMPGP, but also benefits from the inherent sparsity in the solutions it generates. The sparsity level of the solutions, calculated as the ratio of nonzero elements to the total number of elements, is reported in Table \ref{tab:gsmgp-vs-competitors}. We note that both GSMPGP and GSMGP obtain sparse solutions, in comparison to SMGP. By pinpointing only the significant sub-kernels, GSMPGP and GSMGP avoid over-parameterization, yielding better generalization capability and lower prediction MSE.

\subsection{Scalability}
\label{subsec:scalability}
In this subsection, we aim to investigate the scalability of SLIM-KL through experiments with varying numbers of computing units and agents. We first consider varying $s \in \{1, 4, 10, 50, 100\}$, and evaluate the performance based on the prediction MSE and computation time (CT). Results in Table \ref{tab:varying-s-MSE} show that the prediction MSE does not always worsens as $s$ increases. In fact, for several datasets, an improved MSE can be achieved when using more than one computing unit. Moreover, Fig. \ref{fig:dsca_ct} highlights the advantage of using multiple computing units in reducing the overall CT, supporting the claim given in Remark \ref{remark:complexity}.

\begin{table}[!t]
\centering
\caption{Performance of SLIM-KL ($N = 2, \Delta = 0.01$) with varying number of computing units $s$, in terms of the prediction MSE. The lowest MSE value for each dataset is highlighted in \textbf{bold}.}
\begin{tabular}{@{}rccccc@{}}
\toprule
Dataset     & $s = 1$ & $s = 4$ & $s = 10$& $s = 50$& $s = 100$\\ \midrule
ECG          & \textbf{1.2E-02}                                       & \textbf{1.2E-02}                                       & \textbf{1.2E-02}                                        & 1.5E-02                                                 & 1.5E-02                                                  \\
CO2          & {4.9E-01}                                       & \textbf{3.7E-01}                                                & 6.2E-01                                                 & 1.1E+00                                                 & 6.6E-01                                                  \\
Unemployment & 2.3E+03                                                & \textbf{2.0E+03}                                       & 2.2E+03                                        & 2.2E+03                                        & 3.3E+03                                                  \\ 
ALE      & \textbf{2.5E-02}         & \textbf{2.5E-02}                                                            & \textbf{2.5E-02}                                                           & \textbf{2.5E-02}  & \textbf{2.5E-02}                                               \\
CCCP     & \textbf{1.6E+01} & \textbf{1.6E+01}                                                   & \textbf{1.6E+01}                                                   & \textbf{1.6E+01}  
& \textbf{1.6E+01}                                                \\ 
Toxicity & \textbf{1.4E+00} & \textbf{1.4E+00  }                                                         & \textbf{1.4E+00}                                                           &\textbf{ 1.4E+00} &   \textbf{1.4E+00}                                          \\
Concrete & \textbf{5.9E+01} & \textbf{5.9E+01}                                                  & \textbf{5.9E+01}                                                  & \textbf{5.9E+01} & \textbf{5.9E+01}
\\
Water & \textbf{1.6E-04} & \textbf{1.6E-04} & \textbf{1.6E-04} & \textbf{1.6E-04}  & \textbf{1.6E-04}  \\
Wine & \textbf{4.6E-01} & \textbf{4.6E-01}                                                    & \textbf{4.6E-01}                                                   & \textbf{4.6E-01}  & \textbf{4.6E-01} \\
\bottomrule
\end{tabular}
\label{tab:varying-s-MSE}
\end{table}

\begin{figure}[t!]
    \centering
    \includegraphics[width=1\linewidth]{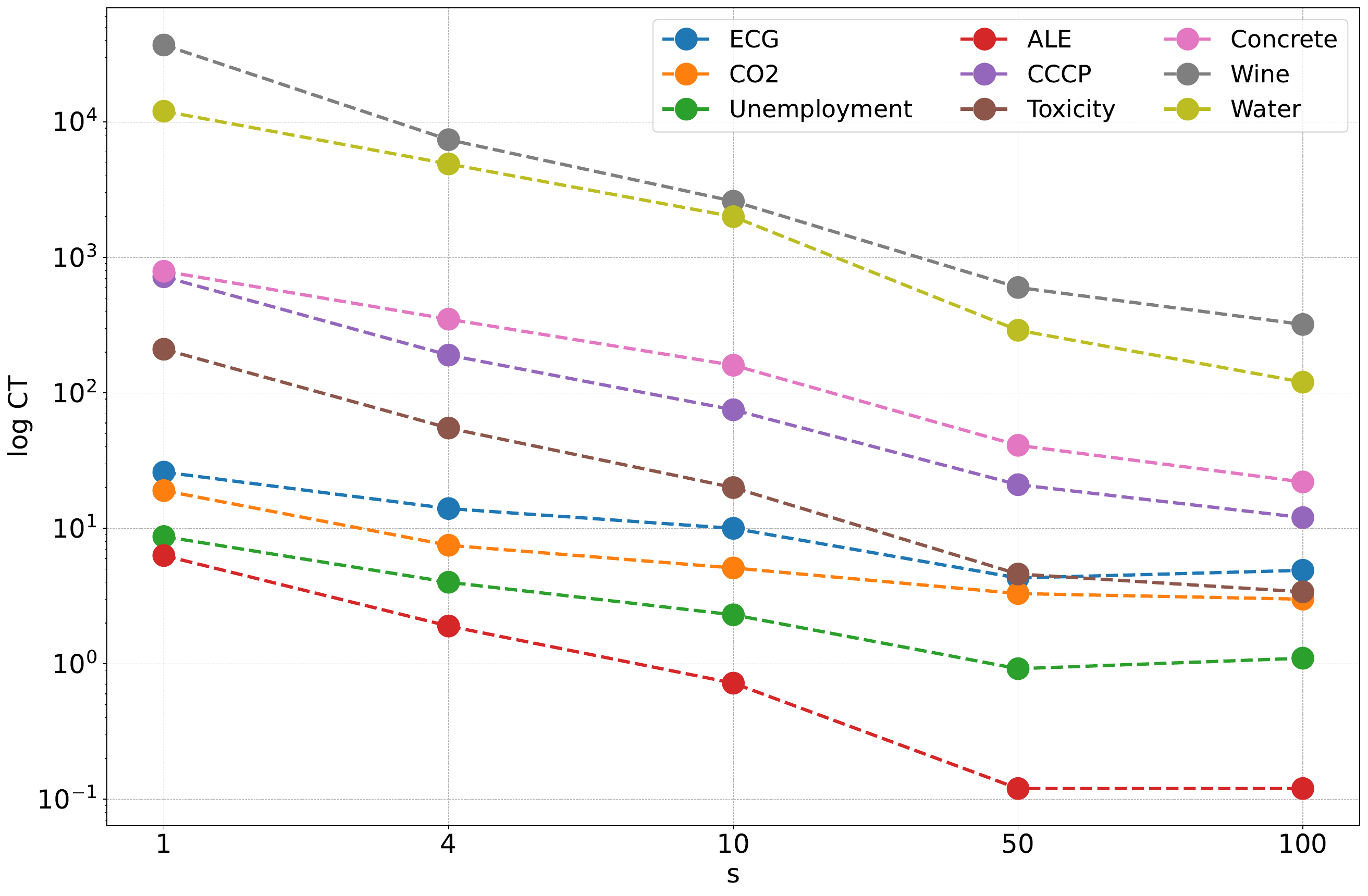}
    \caption{Total computation time (in log-scale) for one computing unit, with respect to different values of $s$.}
    \label{fig:dsca_ct}
\end{figure}

We also test the effect of varying the number of local agents on the prediction performance of SLIM-KL. In particular, we set $N \in \{2, 4, 8, 10\}$ and compare them with the centralized case which serves as the baseline. The results in Table \ref{tab:different-N-MSE} illustrate that SLIM-KL performs better on nearly all datasets when using multiple local agents compared to the centralized case. However, when $N$ is set to be relatively large compared to the size of the training set, the prediction performance degrades as each local agent has fewer data available for training, as evident, e.g., in the \textit{Unemployment} dataset. Hence, there is a trade-off between utilizing more distributed agents and ensuring each agent has sufficient data for training.

\begin{table}[!t]
\centering
\caption{Performance of SLIM-KL  ($s = 4, \Delta = 0.01$) with varying number of local agents $N$, in terms of the prediction MSE. The lowest MSE value for each dataset is highlighted in \textbf{bold}.}
\begin{tabular}{@{}rccccc@{}}
\toprule
Dataset      & Centralized & $N = 2$& $N = 4$ & $N = 8$ & $N = 10$ \\ \midrule
ECG          & \textbf{1.2E-02}                                            & \textbf{1.2E-02}                                            & 1.3E-02                                                     & 1.4E-02                                                     & 2.2E-02                                                      \\
CO2          & 4.5E-01                                                     & \textbf{3.7E-01}                                            & 5.3E-01                                                     & 8.0E-01                                                     & 5.7E-01                                                      \\
Unemployment & 2.3E+03                                                     & \textbf{2.0E+03}                                            & 4.9E+03                                                     & 8.2E+03                                                     & 9.1E+03                                                      \\ 
ALE      & \textbf{2.5E-02} & \textbf{2.5E-02} & \textbf{2.5E-02} & \textbf{2.5E-02} & \textbf{2.5E-02}                                      \\
CCCP     & \textbf{1.6E+01} & \textbf{1.6E+01}  & \textbf{1.6E+01}  & \textbf{1.6E+01}  & \textbf{1.6E+01}             \\ 
% Airfoil  & 9.5E+00 & 9.5E+00 & 9.5E+00 & \textbf{9.4E+00} & \textbf{9.4E+00} \\ 
Toxicity & \textbf{1.4E+00} & \textbf{1.4E+00} & 1.8E+00 & 1.7E+00 & 1.8E+00                                         \\
Concrete & 8.3E+01 & \textbf{5.9E+01} & 8.4E+01 & 8.4E+01  & 8.5E+01  \\
Water & \textbf{1.6E-04} & \textbf{1.6E-04} & \textbf{1.6E-04} & \textbf{1.6E-04} & 1.7E-04  \\
Wine & \textbf{4.6E-01} & \textbf{4.6E-01} & 4.7E-01 & 4.9E-01 & 5.3E-01 \\
\bottomrule
\end{tabular}
\label{tab:different-N-MSE}
\end{table}

To further evaluate the scalability of SLIM-KL, we test its performance on a large dataset. Specifically, we consider the \textit{CCCP} dataset and increase the training set size $|\mathcal{D}|$ to 9500 and the test set size $|\mathcal{D}_*|$ to 68. We set $N = 10$ and $s = 4$ while keeping the other experimental setup unchanged. Remarkably, we observed that SLIM-KL continues to perform well even with a dataset nearly ten times larger. The prediction MSE remains low at 1.6E+01, highlighting the algorithm's ability to handle big data.

\subsection{Effect of Quantization}
\label{subsec:quantization}
\begin{figure}[t!]
    \centering
    \includegraphics[width=\linewidth]{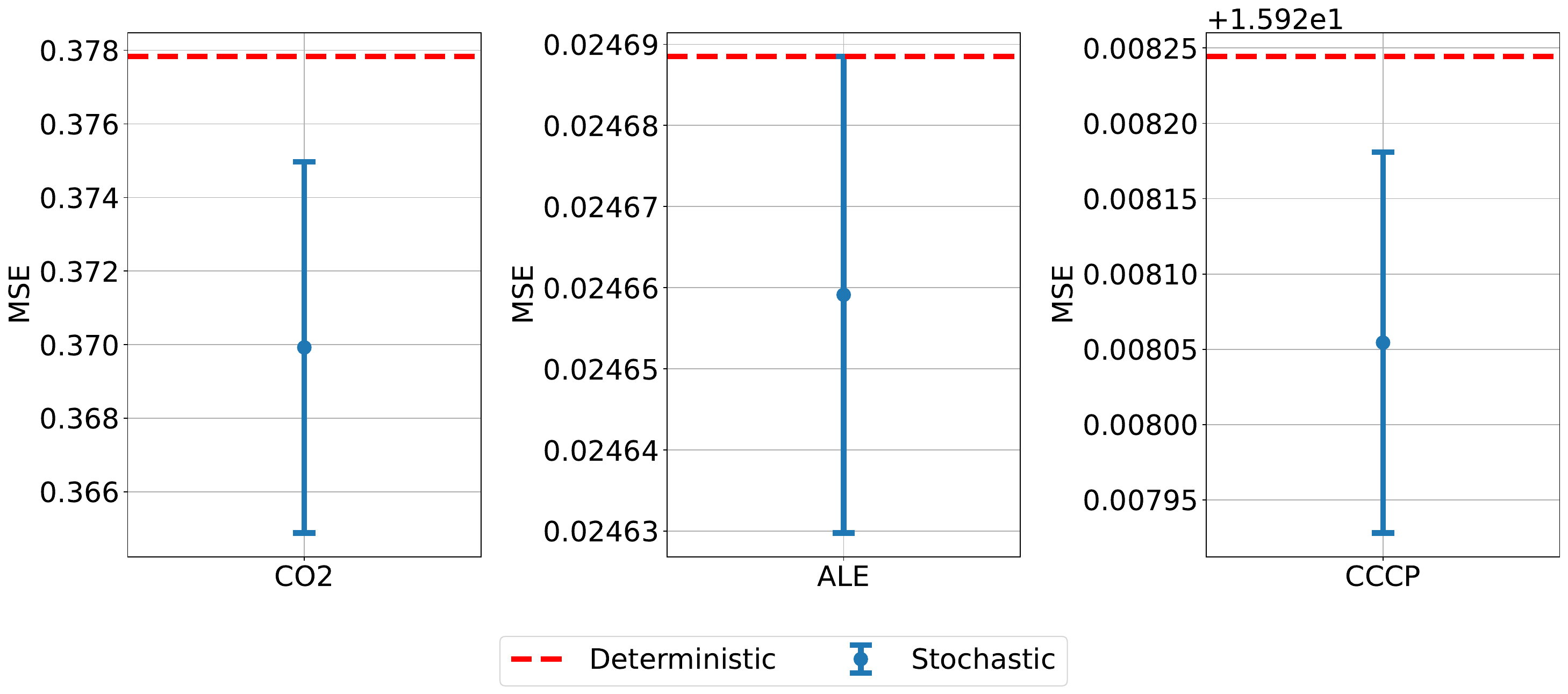}
    \caption{Performance comparison of SLIM-KL under stochastic quantization versus deterministic quantization, with $\Delta = 0.01$. The blue dots with vertical error bars indicate the mean MSE plus-minus two standard deviations when using the stochastic quantization, while the red dashed lines represent the MSE when using the deterministic quantization.}
    \label{fig:quantization-comparison}
\end{figure}

This subsection studies the effect of quantization on the performance of SLIM-KL through ablation experiments. We compare two different choices of quantization scheme: stochastic quantization and deterministic quantization. Due to the random nature of the stochastic quantization, we repeat each experiment 5 times and report the mean MSE with corresponding error bars. Fig. \ref{fig:quantization-comparison} reveals that the stochastic quantization scheme achieves a consistently lower MSE than deterministic quantization, despite its variability. The error bars, denoting plus-minus two standard deviations, suggest that while the MSE for stochastic quantization varies, it does so within a bounded variance, indicating a reliable performance with potential benefits from its inherent randomness.

We further studied the performance of SLIM-KL under different quantization resolutions $\Delta \in \{0.001, 0.01, 0.1, 1, 5\}$ and compared them with the case without quantization. To evaluate the performance, we empirically computed the prediction MSE and the number of bits required to transmit the local hyper-parameters. Assuming double precision, the size of the communication overhead for transmitting a normal vector $\boldsymbol{\theta}^t \in \mathbb{R}^Q$ is $B = 64Q$ bits, while for the quantized vector $\mathcal{Q}(\boldsymbol{\theta}^t)$ it is $B_\mathcal{Q} = Q \log_2 \left( (\theta_\mathrm{max}^t - \theta_\mathrm{min}^t)/\Delta + 1 \right)$ bits. Thus, the saving ratio in transmission bits when using quantization can be computed as: $B/B_\mathcal{Q} = 64/\log_2 \left( (\theta_\mathrm{max}^t - \theta_\mathrm{min}^t)/\Delta + 1 \right)$.

Results in Table \ref{tab:different-delta} demonstrate that quantization does not necessarily degrade prediction performance. In fact, on the \textit{CO2} dataset, setting $\Delta = 1$ yields the lowest prediction MSE while saving $4.8\times$ fewer bits on average, as depicted in Fig.~\ref{fig:saving_ratio_sep}. This improved performance corroborates the design of SLIM-KL, where the enhanced sparsity can be beneficial in guiding the algorithm towards a more favorable optimum, as elaborated in Remark \ref{remark:slim-kl}. Overall, quantization demonstrates remarkable communication cost savings of over 200 times in the \textit{Water} dataset while maintaining promising prediction performance. 

\begin{figure}[t]
    \centering
    \includegraphics[width=\linewidth]{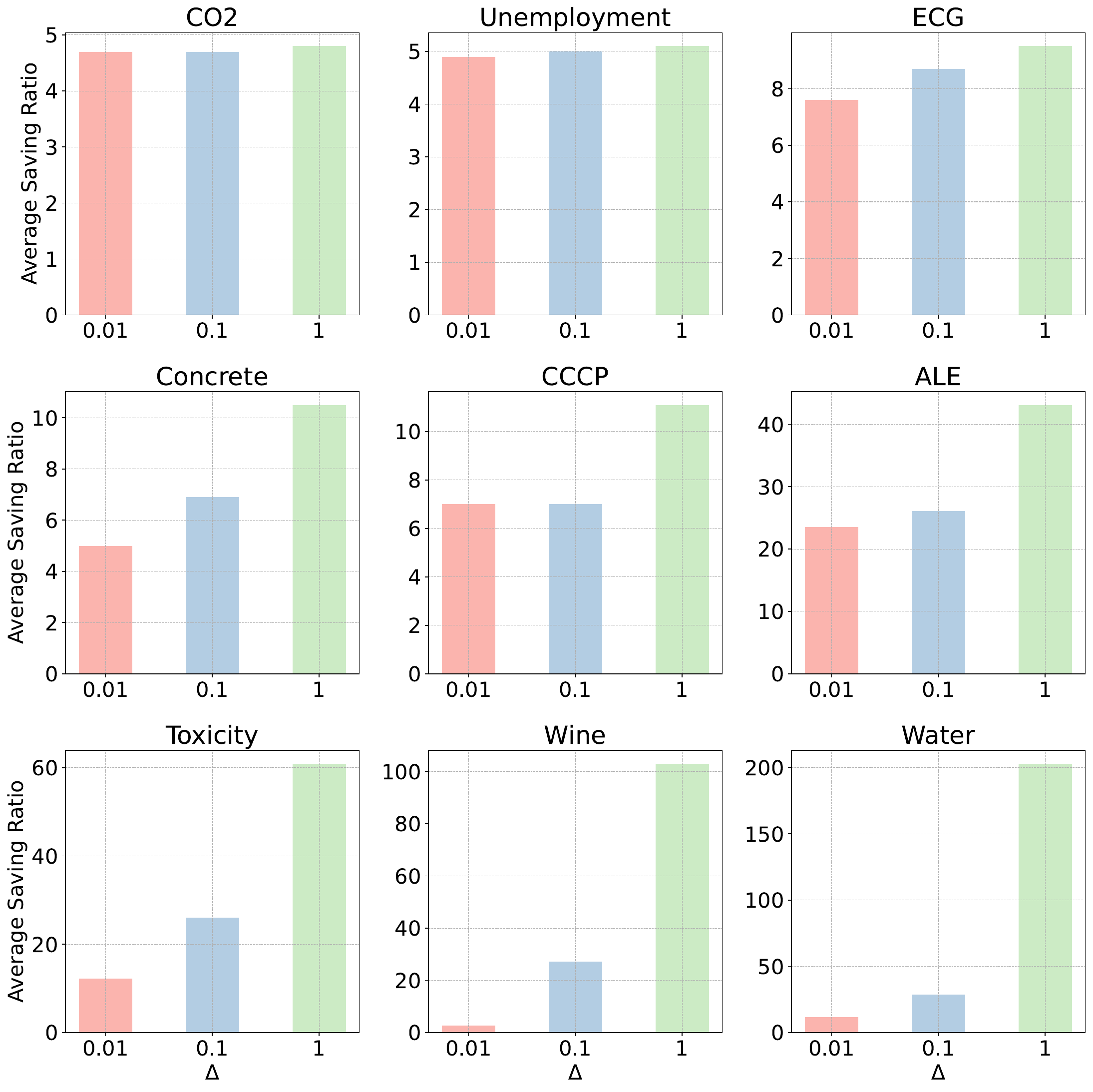}
    \caption{Average saving ratio in transmitting the local hyper-parameters when using quantization versus without quantization, with respect to different quantization resolution $\Delta$.}
    \label{fig:saving_ratio_sep}
\end{figure}

\begin{remark}
A lower quantization resolution (e.g., $\Delta = 0.01$) offers higher precision but requires more bits, increasing communication overhead. In contrast, a higher resolution (e.g., $\Delta = 1$) reduces data transmission size, but may degrade performance due to information loss. We found that a resolution of 0.1 strikes a practical balance between model performance and communication cost.
\end{remark}

\begin{table*}[!t]
\centering
\caption{Performance of SLIM-KL ($s = 4, N = 2$) with and without quantization in terms of the prediction MSE. The lowest MSE value is highlighted in \textbf{bold}.}
\label{tab:different-delta}
\begin{tabular}{rcccccc}
\toprule
\multirow{2}{*}{Dataset} & \multirow{2}{*}{\begin{tabular}[c]{@{}c@{}}Without \\ Quantization\end{tabular}} & \multicolumn{5}{c}{With Quantization}  \\ 
& & \multicolumn{1}{l}{$\Delta = 0.001$} & \multicolumn{1}{l}{$\Delta = 0.01$} & \multicolumn{1}{l}{$\Delta = 0.1$} & \multicolumn{1}{l}{$\Delta = 1$} & \multicolumn{1}{l}{$\Delta = 5$} \\ \midrule
ECG          & \textbf{1.2E-02} & \textbf{1.2E-02} & \textbf{1.2E-02} & 2.7E-02 & 1.6E-01 & 3.9E-01 \\
CO2          & 3.7E-01 & 3.7E-01 & 3.7E-01 & 6.5E-01 & \textbf{2.7E-01} & 6.3E-01 \\
Unemployment & \textbf{2.0E+03} & \textbf{2.0E+03} & \textbf{2.0E+03} & \textbf{2.0E+03} & \textbf{2.0E+03} & 7.2E+03 \\ 
ALE          & \textbf{2.5E-02} & \textbf{2.5E-02} & \textbf{2.5E-02} & \textbf{2.5E-02} & \textbf{2.5E-02} & 0.9E-01 \\
CCCP         & \textbf{1.6E+01} & \textbf{1.6E+01} & \textbf{1.6E+01} & \textbf{1.6E+01} & \textbf{1.6E+01} & 1.7E+01 \\ 
Toxicity     & \textbf{1.4E+00} & \textbf{1.4E+00} & \textbf{1.4E+00} & \textbf{1.4E+00} & \textbf{1.4E+00} & 1.5E+00 \\
Concrete     & \textbf{5.9E+01} & \textbf{5.9E+01} & \textbf{5.9E+01} & \textbf{5.9E+01} & \textbf{5.9E+01} & \textbf{5.9E+01} \\
Water        & \textbf{1.6E-04} & \textbf{1.6E-04} & \textbf{1.6E-04} & \textbf{1.6E-04} & 4.4E-01 & 4.4E-01 \\
Wine         & \textbf{4.6E-01} & \textbf{4.6E-01} & \textbf{4.6E-01} & \textbf{4.6E-01} & \textbf{4.6E-01} & 4.8E-01 \\
\bottomrule
\end{tabular}
\end{table*}

\subsection{Long Time Series Forecasting}
\label{subsec:lstf}
In this subsection, we extend the evaluation of our proposed method to larger-scale datasets for real world long time series forecasting tasks. We assess the performance using five popular real world datasets: \textit{Weather}\footnote{\url{https://www.bgc-jena.mpg.de/wetter/}} and four \textit{ETT}\footnote{\url{https://github.com/zhouhaoyi/ETDataset}} datasets (\textit{ETTh1, ETTh2, ETTm1, ETTm2}). The details of these datasets can be found in \cite{zhou2021informer}. For the GSMPGP model, we optimized the hyper-parameters using SLIM-KL with $s=100$ computing units, $N=100$ agents, and a quantization resolution of $\Delta = 0.01$. For the other models, we followed the configurations reported in \cite{lin2023segrnn}. We compare our method against several recent state-of-the-art (SOTA) models, including SegRNN \cite{lin2023segrnn}, PatchTST \cite{nie2023time}, MICN \cite{wang2022micn}, and TiDE \cite{das2024longterm}.

The results in Table \ref{tab:lstf} reveal that our GSMPGP model ranked in the top two positions for four out of five datasets, securing first place in two instances. This performance highlights its superiority over other baseline models. Specifically, GSMPGP demonstrated exceptional performance on the \textit{ETT} datasets, nearly achieving SOTA results in terms of MSE. Although there was a slight decrease in performance on the \textit{Weather} dataset, GSMPGP remained competitive compared to other models. These results also indicate that, as we increase the number of agents, SLIM-KL scales effectively, maintaining competitive performance even with a large number of distributed agents. We further compare SLIM-KL with a recent distributed multiple kernel learning method, DOMKL \cite{hong2023domkl}, as well as two random feature-based multiple kernel learning methods, CoKle and BoKle \cite{hong2024omkl}. The results summarized in Table  demonstrate that SLIM-KL outperforms the other multiple kernel learning methods in terms of prediction MSE.

\begin{table*}[!t]
    \centering
    \caption{Performance comparison on long time series forecasting tasks. The lowest MSE value is highlighted in \textbf{bold}, and the second lowest is \underline{underlined}. The results of the SOTA models are reported from \cite{lin2023segrnn}.}
    \label{tab:lstf}
    \begin{tabular}{@{}rccccccc@{}}
    \toprule
    Dataset & GSMPGP (ours) & PatchTST (2023) & SegRNN (2023) & MICN (2023) & TiDE (2023) \\
    \midrule
    ETTh1 & \textbf{0.431} & 0.447 & \underline{0.434} & 0.770 & 0.454 \\
    ETTh2 & \underline{0.382} & \textbf{0.379} & 0.394 & 0.956 & 0.419 \\
    ETTm1 & \textbf{0.407} & 0.416 & \underline{0.410} & 0.481 & 0.413 \\
    ETTm2 & \underline{0.341} & 0.362 & \textbf{0.330} & 0.502 & 0.352 \\
    Weather & 0.313 & 0.314 & \textbf{0.310} & \underline{0.311} & 0.313 \\
    \bottomrule
    \end{tabular}
\end{table*}

\begin{table}[!t]
    \centering
    \caption{Performance comparison on long time series forecasting tasks. The lowest MSE value is highlighted in \textbf{bold}.}
    \label{tab:lstf_additional}
    \begin{tabular}{@{}rccccccc@{}}
    \toprule
    Dataset & SLIM-KL & DOMKL & CoKle & BoKle \\
    \midrule
    ETTh1 & \textbf{0.431} & 0.521 & 0.451 & 0.441 \\
    ETTh2 & \textbf{0.382} & 0.487 & 0.391 & 0.395 \\
    ETTm1 & \textbf{0.407} & 0.463 & 0.423 & 0.410 \\
    ETTm2 & \textbf{0.341} & 0.401 & 0.371 & 0.359 \\
    Weather & \textbf{0.313} & 0.372 & 0.348 & 0.330 \\
    \bottomrule
    \end{tabular}
\end{table}

\section{Conclusion}
\label{sec:conclusion}
In this paper, we proposed a novel GP kernel and a generic sparsity-aware distributed learning framework for linear multiple kernel learning. Experiments showed that the GSMP kernel, with a modest number of hyper-parameters, retains good approximation capability for multi-dimensional data. The GSMGP with SLIM-KL exhibited superior prediction performance over competitors across diverse datasets. Scalability tests revealed that SLIM-KL maintains and improves prediction performance with increased computing units and agents, demonstrating the framework's effectiveness for distributed environments. Furthermore, incorporating a stochastic quantization scheme significantly reduced communication overhead while preserving performance. These results demonstrate how the proposed method expands the capabilities of GPs to handle large-scale, multi-dimensional datasets and broadens their applicability across various engineering tasks.

\appendices
\section{Spectral Density of GSMP Kernel}
\label{appendix:density-gsmp}
We prove Theorem \ref{thm:GSMP_GMM}. By taking the Fourier transform of the GSMP kernel given in Eq. (\ref{eq:gsmp}), its spectral density can be obtained as,
\begin{equation}
\begin{aligned}
    & S_{\mathrm{GSMP}}(\boldsymbol{\omega}) \\
    & = \int_{\mathbb{R}^P} k(\bm{\tau}) \exp \left\{- j 2 \pi \bm{\tau}^\top \bm{\omega}  \right\} d \bm{\tau} \\
    &= \int_{\mathbb{R}^P} \sum_{q=1}^{Q} \theta_{q} \prod_{p=1}^{P} k_q\left(\tau^{(p)}\right) \exp \left\{- j 2 \pi \bm{\tau}^\top \bm{\omega}  \right\} d \bm{\tau} \\
    &= \sum_{q=1}^{Q} \theta_{q} \prod_{p=1}^{P} \underbrace{ \left[ \int_{\mathbb{R}} k_q\left(\tau^{(p)}\right) \exp \left\{ -j 2 \pi \tau^{(p)} \omega^{(p)} \right\} d\tau^{(p)} \right]}_{\text{Fourier transform of $k_q\left(\tau^{(p)}\right):$} \ \mathcal{F}\left[k_q\left(\tau^{(p)}\right)\right]}
\end{aligned}
\end{equation}
Subsituting the definition of $k_q\left(\tau^{(p)}\right)$ yields,
\begin{equation}
\begin{aligned}
    & \mathcal{F}\left[k_q\left(\tau^{(p)}\right)\right]  \\
    & = \mathcal{F}\left[\exp \left\{-2 \pi^{2} {\tau^{(p)^2}} v_{q}^{{(p)}}\right\} \cos \left(2 \pi \tau^{(p)} \mu_{q}^{(p)}\right)\right] \\
    &= \mathcal{F}\left[\exp \left\{-2 \pi^{2} {\tau^{(p)^2}} v_{q}^{{(p)}}\right\} \right] * \mathcal{F}\left[ \cos \left(2 \pi \tau^{(p)} \mu_{q}^{(p)}\right)\right] \\
    &= \frac{1}{\sqrt{{2\pi v_q^{(p)}}}} \exp \left\{\!-\frac{\omega^{{(p)}^2}}{2 v_q^{(p)}} \!\right\} * \frac{1}{2}\big(\delta(\omega^{(p)} \!-\! \mu_q^{(p)}) \\ & \hspace{2ex} + \, \delta(\omega^{(p)}+\mu_q^{(p)})\big) \\
    &= \frac{1}{2} \left[ \frac{1}{\sqrt{{2\pi v_q^{(p)}}}} \exp \left\{-\frac{\left(\omega^{(p)} - \mu_q^{(p)}\right)^2}{2 v_q^{(p)}} \right\} \right. \\ 
    & \hspace{4.5ex} \left. + \frac{1}{\sqrt{{2\pi v_q^{(p)}}}} \exp \left\{-\frac{\left(\omega^{(p)} + \mu_q^{(p)}\right)^2}{2 v_q^{(p)}} \right\} \right] \\
    &= \frac{1}{2} \left[ \mathcal{N}\left(\omega^{(p)}\mid \mu_q^{(p)}, v_q^{(p)}\right) + \mathcal{N}\left(\omega^{(p)}\mid - \mu_q^{(p)}, v_q^{(p)}\right) \right]
\end{aligned}
\end{equation}
Therefore, the spectral density of the GSMP kernel is given by,
\begin{equation}
\begin{aligned}
    &S_{\mathrm{GSMP}}(\boldsymbol{\omega}) = \sum_{q=1}^{Q} \theta_{q} \prod_{p=1}^{P} \mathcal{F}\left[k_q\left(\tau^{(p)}\right)\right] \\
    & = \frac{1}{2^P} \sum_{q=1}^{Q} \theta_{q} \prod_{p=1}^{P} \left[ \mathcal{N} \! \left(\omega^{(p)}\mid \mu_q^{(p)}, v_q^{(p)} \!\right) \!+\! \mathcal{N} \! \left(\omega^{(p)}\mid \!-\! \mu_q^{(p)}, v_q^{(p)} \!\right) \right]
\end{aligned}
\end{equation}
which is a Gaussian mixture.

\section{Proof of Theorem \ref{thm:sparsity}}
\label{appendix:sparsity}
\begin{proof}
The proof is an extension of \cite[Theorem 2]{wipf2004SBL}. We first state the following lemma that is necessary for the final result.

\begin{lemma}[Section IV.A \cite{wipf2004SBL}]
    \label{lemma:constant}
    The term $\boldsymbol{y}^\top \bm{C}^{-1}(\bm{\theta})\bm{y}$ equals a constant $C$ over all $\btheta$ satisfying the $n$ linear constraints $ \boldsymbol{A}\bm{\theta} = \boldsymbol{b}$ where,
     \begin{align}
        \label{eq:A}
        \boldsymbol{A} \triangleq [\boldsymbol{v}_1, \boldsymbol{v}_2, \ldots, \boldsymbol{v}_Q], \ \ 
        \boldsymbol{b} \triangleq \boldsymbol{y} - \sigma_e^2 \boldsymbol{u},
    \end{align}
    with $\boldsymbol{v}_q \triangleq \boldsymbol{K}_q \boldsymbol{u}$ and $\boldsymbol{u}$ is any fixed vector such that $\boldsymbol{y}^\top \boldsymbol{u} = C$. 
\end{lemma}

Consider the following optimization problem,
\begin{align}
    \label{eq:opt-problem}
    &\arg \min_{\btheta} \, \log \det \left(\bm{C}(\bm{\theta})\right) \nonumber \\
    \enspace &\operatorname{s.t.} \,\boldsymbol{A}\boldsymbol{\btheta} = \boldsymbol{b}, \btheta \geq \bm{0},
\end{align}
where $\boldsymbol{A}$ and $\boldsymbol{b}$ are as defined in Lemma \ref{lemma:constant}. Under the GSMP kernel constraint, we define $\boldsymbol{u} = \bm{C}^{-1}(\bm{\theta}) \boldsymbol{y}$, and deduce $\boldsymbol{y} - \sigma_e^2 \boldsymbol{u} = \sum_{q = 1}^Q \theta_q \boldsymbol{K}_q \boldsymbol{u}$. 

According to Lemma \ref{lemma:constant}, the above constraints hold $\bm{y}^\top  \bm{C}^{-1}(\bm{\theta})\bm{y}$ constant on a closed, bounded convex polytope. Hence, we focus on minimizing the second term of $l(\btheta)$ in Eq. (\ref{eq:DCP}) while holding the first term constant to some $C$. It follows that any local optimum of $l(\btheta)$ is also a local optimum of Eq. (\ref{eq:opt-problem}) with $C = \boldsymbol{y}^\top \boldsymbol{u}$. In \cite{luenbergerLinearNonlinearProgramming1984}, it is further established that all minima of Eq. (\ref{eq:opt-problem}) occur at extreme points and these extreme points are equivalent to basic feasible solutions, i.e., solutions with at most 
$n$ nonzero values. This implies that all local minima of Eq. (\ref{eq:opt-problem}) must be achieved at sparse solutions.
\end{proof}

\section{Local Problem Reformulation}
\label{appendix:reformulation}
Since the term $\tilde{l}_i(\bzeta_{j, i}, \bzeta_j^\eta)$ in Eq. (\ref{eq:dsca_problem}) contains a matrix fractional component, we can introduce an auxiliary variable $z$ such that, $\bm{y}^\top \bm{C}^{-1}(\bzeta_{j, i}, \bzeta_{j, -i}^\eta) \bm{y} \leq z$, and use the Schur complement condition \cite{boydConvexOptimization2004} to obtain an semi-definite programming (SDP) \cite{vandenbergheSemidefiniteProgramming1996} formulation of the problem:
\begin{align} \label{eq:dsca_sdp}
        &\min_{\bzeta_{j, i}, z} \, z - \nabla_{\bzeta_{j, i}} h(\bzeta_j^\eta)^\top \bzeta_{j, i} + \left(\boldsymbol{\lambda}_{j, i}^{t}\right)^{\top}\left(\bzeta_{j, i}- \btheta_i^{t+1}\right) 
        \nonumber \\ 
        &\hspace{4.5ex} + \frac{\rho_j}{2}\left\|\bzeta_{j, i}-\btheta_i^{t+1}\right\|_{2}^{2} \nonumber \\
        &\text{s.t.} \, \enspace \begin{bmatrix}
    \bm{C}(\bzeta_{j, i}, \bzeta_{j, -i}^\eta) & \bm{y} \\
    \bm{y}^\top & z
    \end{bmatrix}
    \succeq \bm{0}, \enspace \bzeta_{j, i} \geq \bm{0}.
\end{align}
Due to  the fact that $\bm{C}(\bzeta_{j, i}, \bzeta_{j, -i}^\eta)$ is a sum of positive semi-definite terms, the SDP can be reformulated by introducing $Q/s + 1$ rotated quadratic cone constraints \cite{loboApplicationsSecondorderCone1998}: 
\begin{align} \label{eq:cone_dsca}
        &\min_{\bzeta_{j, i}, \bm{z}, \bm{w}} \, (\bm{1}^\top \bm{z}) - \nabla_{\bzeta_{j, i}} h(\bzeta_j^\eta)^\top \bzeta_{j, i} + \left(\boldsymbol{\lambda}_{j, i}^{t}\right)^{\top}\left(\bzeta_{j, i}- \btheta_i^{t+1}\right) 
        \nonumber \\ 
        &\hspace{7ex} + \frac{\rho_j}{2}\left\|\bzeta_{j, i}-\btheta_i^{t+1}\right\|_{2}^{2} \nonumber \\
        &\text{s.t.} \, \enspace \|\bm{w}_0\|_2^2 \leq z_0 \nonumber ,\\
        & \hspace{4.24ex} \|\bm{w}_j\|_2^2 \leq \bzeta_{j, k  + (i - 1)Q/s} z_k, \enspace k \in \{ 1,2, \ldots, Q/s\} \nonumber ,\\
        & \hspace{4.5ex} \bm{y} = \sum_{k = 1}^{Q/s} \bm{L}_{k + (i - 1)Q/s} \bm{w}_k \nonumber \\
        & \hspace{6ex} + \left( \sum_{n \neq i}\sum_{k = 1}^{Q/s} \bzeta_{j, k  + (n - 1)Q/s}^\eta \bm{K}_{k  + (n - 1)Q/s} + \sigma_e^2 \bm{I}_n \right)^{\frac{1}{2}} \! \bm{w}_0 \nonumber ,\\
        & \hspace{4.75ex} \bzeta_{j, i} \geq \bm{0}, \enspace \bm{z} \geq \bm{0}.
\end{align}
where $\bm{z} = [z_0, z_1, \ldots, z_{Q/s}] \in \mathbb{R}^{Q/s + 1}$, $\bm{w}_0 \in \mathbb{R}^n, \bm{w}_k \in \mathbb{R}^{n_j}$ for $ k \in \{ 1, 2, \ldots, Q/s \}$, and $\bm{L}_k \in \mathbb{R}^{n \times n_k}$ is a low-rank kernel matrix approximation factor which can be obtained using random Fourier features \cite{rahimiRandomFeaturesLargeScale2007} or Nyström approximation \cite{williamsUsingNystromMethod2000}. This reformulation enables us to further express the problem as a second-order cone programming (SOCP) problem \cite{loboApplicationsSecondorderCone1998}:
\begin{align} \label{eq:socp_d2sca}
        &\min_{\bzeta_{j, i}, \bm{z}, \bm{w}} \, (\bm{1}^\top \bm{z}) - \nabla_{\bzeta_{j, i}} h(\bzeta_j^\eta)^\top \bzeta_{j, i} + \left(\boldsymbol{\lambda}_{j, i}^{t}\right)^{\top}\left(\bzeta_{j, i}- \btheta_i^{t+1}\right) 
        \nonumber \\ 
        &\hspace{7ex} + \frac{\rho_j}{2}\left\|\bzeta_{j, i}-\btheta_i^{t+1}\right\|_{2}^{2} \nonumber \\
        &\text{s.t.} \, \enspace \left\| 
    \begin{bmatrix}
    2\bm{w}_0 ,\\
    z_0 - 1
    \end{bmatrix}
    \right\|_2
    \leq z_0 + 1 \nonumber ,\\
        & \hspace{4.15ex} \left\| 
    \begin{bmatrix}
    2\bm{w}_k ,\\
    z_k - \bzeta_{j, k  + (i - 1)Q/s}
    \end{bmatrix}
    \right\|_2
    \leq z_k + \bzeta_{j, k  + (i - 1)Q/s}, \nonumber \\
    & \hspace{30.5ex} k \in \{ 1,2, \ldots, Q/s\} \nonumber ,\\
        & \hspace{4ex} \bm{y} = \sum_{k = 1}^{Q/s} \bm{L}_{k + (i - 1)Q/s} \bm{w}_k \nonumber \\
        & \hspace{4ex} + \left( \sum_{n \neq i}\sum_{k = 1}^{Q/s} \bzeta_{j, k  + (n - 1)Q/s}^\eta \bm{K}_{k  + (n - 1)Q/s} + \sigma_e^2 \bm{I}_n \right)^{\frac{1}{2}} \bm{w}_0 \nonumber ,\\
        & \hspace{4.75ex} \bzeta_{j, i} \geq \bm{0}, \enspace \bm{z} \geq \bm{0}.
\end{align}
Notice that the term $\left(\boldsymbol{\lambda}_{j, i}^{t}\right)^{\top}\btheta_i^{t+1}$ is a constant and the 2-norm term in the objective, $\left\|\bzeta_{j, i}-\btheta_i^{t+1}\right\|_{2}^{2}$, can be replaced with a rotated quadratic cone constraint by introducing a variable $v\geq 0$:
\begin{align}
        &\min_{\bzeta_{j, i}, \bm{z}, \bm{w}, v} \, (\bm{1}^\top \bm{z}) + \left(\blambda_{j, i}^t - \nabla_{\bzeta_{j, i}} h(\bzeta_j^\eta)\right)^\top \bzeta_{j, i} +\frac{\rho_j}{2} v \nonumber \\
        &\text{s.t.} \, \enspace \left\| 
    \begin{bmatrix}
    2\bm{w}_0 ,\\
    z_0 - 1
    \end{bmatrix}
    \right\|_2
    \leq z_0 + 1 \nonumber ,\\
        & \hspace{4.15ex} \left\| 
    \begin{bmatrix}
    2\bm{w}_k ,\\
    z_k - \bzeta_{j, k  + (i - 1)Q/s}
    \end{bmatrix}
    \right\|_2
    \leq z_k + \bzeta_{j, k  + (i - 1)Q/s}, \nonumber \\
    & \hspace{30.5ex} k \in \{ 1,2, \ldots, Q/s\} \nonumber ,\\
        & \hspace{4ex} \bm{y} = \sum_{k = 1}^{Q/s} \bm{L}_{k + (i - 1)Q/s} \bm{w}_k \nonumber \\
        & \hspace{4.5ex} + \left( \sum_{n \neq i}\sum_{k = 1}^{Q/s} \bzeta_{j, k  + (n - 1)Q/s}^\eta \bm{K}_{k  + (n - 1)Q/s} + \sigma_e^2 \bm{I}_n \right)^{\frac{1}{2}} \!\bm{w}_0 \nonumber ,\\
        & \hspace{4ex}  2v \frac{1}{2} \geq \sum_{k = 1}^{Q/s} \left(\bzeta_{j, k + (i - 1)Q/s}-\btheta_{k + (i - 1)Q/s}^{t+1}\right)^2 \nonumber ,\\
        & \hspace{4ex} \bzeta_{j, i} \geq \bm{0}, \enspace \bm{z} \geq \bm{0},
\end{align}
which is equivalent to the conic problem given in Eq. (\ref{eq:socp_d2sca_conic}).

\section{Proof of Theorem \ref{theorem:dsca_convergence}} \label{appendix:proof_theorem_dsca_coverg}
\begin{proof}
To prove the convergence of the DSCA algorithm, in general, we need to select an appropriate step size, which is crucial to ensure the algorithm's convergence, as mentioned in \cite{scutariParallelDistributedSuccessive2018}. However, in our case, a step size is no longer necessary as each surrogate function constructed based on Eq. (\ref{eq:dsca-surrogate}) already makes $\ell(\bzeta_{j, i},  \bzeta_j^\eta)$ a global upper bound for the local objective function. In particular, $\ell(\bzeta_{j, i},  \bzeta_j^\eta)$ is a majorization function for the local objective function in the $i$-th block,  $L(\bzeta_{j,i}, \bzeta_{j, -i}^\eta) \triangleq l(\bzeta_{j,i}, \bzeta_{j, -i}^\eta) \!+\! \left(\boldsymbol{\lambda}_{j, i}^{t}\right)^{\top} \!\! \left(\bzeta_{j, i} \!-\! \btheta_i^{t+1}\right) \!+\! \frac{\rho_j}{2}\left\|\bzeta_{j, i}-\btheta_i^{t+1}\right\|_{2}^{2}$, satisfying the following conditions:
\begin{assumption}
	\label{assumption:mm}
	Each $\ell(\bzeta_{j, i},  \bzeta_j^\eta)$ is a majorization function of $L(\bzeta_{j, i}, \bzeta_{j, -i}^\eta)$, for $i \in \{1, \ldots, s\}$, and satisfies the following conditions:
	\begin{enumerate}
	\item $\ell(\bzeta_{j, i}^\eta,  \bzeta_j^\eta) = L(\bzeta_{j, i}^\eta, \bzeta_{j, -i}^\eta)$, for $\bzeta_{j, i}^\eta \in \Theta_i$;
        \item $L(\bzeta_{j, i}, \bzeta_{j, -i}^\eta) \leq \ell(\bzeta_{j, i},  \bzeta_j^\eta)$, $\forall \bzeta_{j, i} \in \Theta_i, \bzeta_j^t \in \Theta$.
	\end{enumerate}
 \end{assumption}
\noindent Consequently, we can deduce that,
\begin{align}
L(\bzeta_{j, i}^{\eta+1}, \bzeta_{j, -i}^\eta) \leq \ell(\bzeta_{j, i}^{\eta+1},  \bzeta_j^\eta) \leq \ell(\bzeta_{j, i}^\eta,  \bzeta_j^\eta) = L(\bzeta_{j, i}^\eta, \bzeta_{j, -i}^\eta),
\end{align}
ensuring the sequence $\{ L(\bzeta_{j, i}^{\eta+1}, \bzeta_{j, -i}^{\eta}) \}_{\eta \in \mathbb{N}}$, is non-increasing. Based upon Assumption \ref{assumption:surrogate-dsca} and \ref{assumption:mm} for each $\ell (\bzeta_{j, i} ,  \bzeta_j^\eta)$, Eq. (\ref{eq:dsca_cvg}) is guaranteed to hold as a result of \cite[Section II.2]{scutariParallelDistributedSuccessive2018}.
\end{proof}

\section{Proof of Lemma \ref{lemma:stochastic-quantization}}
\label{appendix:lemma-proof}
\begin{proof}
By using the definition of stochastic quantization function provided in Eq. (\ref{def:stochastic-quantization}), we have:
\begin{equation}
\begin{aligned}
    \mathbb{E}[\mathcal{Q}(\theta)]
    &= \tau_j \left(1 - \frac{\theta -\tau_j}{\Delta} \right) + \tau_{j + 1}\left(\frac{\theta -\tau_j}{\Delta}\right) \\
    &=\tau_j \left(1 - \frac{\theta -\tau_j}{\Delta} \right) + (\tau_j + \Delta)\left(\frac{\theta -\tau_j}{\Delta} \right)\\
    &= \tau_j - \tau_j\left(\frac{\theta -\tau_j}{\Delta} \right) + \tau_j \left(\frac{\theta -\tau_j}{\Delta} \right) + \theta - \tau_j \\
    &= \theta,
\end{aligned}
\end{equation}
so $\mathcal{Q}(\theta)$ is an unbiased estimator of $\theta$. Next, we can show that
\begin{align}
    &\mathbb{E}\left[(\theta - \mathcal{Q}(\theta))^2\right] \nonumber \\
    &= (\theta -\tau_j)^2 \left(1 - \frac{\theta -\tau_j}{\Delta} \right) + (\theta - \tau_{j + 1})^2 \left(\frac{\theta -\tau_j}{\Delta} \right) \nonumber \\
    &= \Delta^2 \underbrace{\left(1 - \frac{\theta -\tau_j}{\Delta} \right)\left(\frac{\theta -\tau_j}{\Delta} \right)}_{\leq 1/4}\nonumber\\
    &\leq \frac{\Delta^2}{4},
\end{align}
by using the fact that the variance of a random variable that takes value on $[\tau_j, \tau_{j+1})$ is bounded by 1/4 \cite{xiaoDecentralizedEstimationInhomogeneous2005}. This completes the proof for Lemma \ref{lemma:stochastic-quantization}.
\end{proof}

\bibliographystyle{IEEEtran}
\bibliography{references}

\end{document}